\documentclass{article}

\usepackage{microtype}
\usepackage{graphicx}
\usepackage{subcaption}
\usepackage{booktabs} 

\usepackage[preprint]{icml2026}

\usepackage{amsmath}
\usepackage{amssymb}
\usepackage{mathtools}
\usepackage{amsthm}

\usepackage[colorlinks,
		linkcolor = blue,		
		anchorcolor = blue,	
		urlcolor = blue,		
		citecolor = blue,		
]{hyperref}
\usepackage{booktabs}       
\usepackage{graphicx}
\usepackage{cases}
\usepackage{bm}
\usepackage{bbm}
\usepackage{multirow}
\usepackage[dvipsnames]{xcolor} 
\usepackage{colortbl}
\usepackage{listings}
\usepackage{wrapfig}
\usepackage{makecell}
\usepackage{enumitem}
\usepackage{tcolorbox}
\usepackage{svg}

\usepackage[capitalize,noabbrev]{cleveref}

\theoremstyle{plain}
\newtheorem{theorem}{Theorem}[section]
\newtheorem{proposition}[theorem]{Proposition}
\newtheorem{lemma}[theorem]{Lemma}
\newtheorem{corollary}[theorem]{Corollary}
\theoremstyle{definition}

\theoremstyle{remark}

\usepackage[textsize=tiny]{todonotes}

\icmltitlerunning{FourierMoE: Fourier Mixture-of-Experts Adaptation of Large Language Models}

\begin{document}

\twocolumn[
  \icmltitle{FourierMoE: Fourier Mixture-of-Experts Adaptation of Large Language Models}



  \icmlsetsymbol{equal}{*}
  \icmlsetsymbol{corres}{\dag}




  \begin{icmlauthorlist}
  \icmlauthor{Juyong Jiang}{hkustgz,hkust,equal}
  \icmlauthor{Fan Wang}{hkustgz,equal}
  \icmlauthor{Hong Qi}{hkustgz}
  \icmlauthor{Sunghun Kim}{hkustgz}
  \icmlauthor{Jing Tang}{hkustgz,hkust,corres}
  \end{icmlauthorlist}
    
  \icmlaffiliation{hkustgz}{The Hong Kong University of Science and Technology (Guangzhou)}
  \icmlaffiliation{hkust}{The Hong Kong University of Science and Technology}
    
  \icmlcorrespondingauthor{Jing Tang}{jingtang@ust.hk}

  \icmlkeywords{Large Language Models, Fourier Transform, Mixture-of-Experts, Parameter-Efficient Fine-Tuning}

  \vskip 0.3in
]



\printAffiliationsAndNotice{\icmlEqualContribution}

\begin{abstract}
Parameter-efficient fine-tuning (PEFT) has emerged as a crucial paradigm for adapting large language models (LLMs) under constrained computational budgets. However, standard PEFT methods often struggle in multi-task fine-tuning settings, where diverse optimization objectives induce task interference and limited parameter budgets lead to representational deficiency. While recent approaches incorporate mixture-of-experts (MoE) to alleviate these issues, they predominantly operate in the spatial domain, which may introduce structural redundancy and parameter overhead. To overcome these limitations, we reformulate adaptation in the spectral domain. Our spectral analysis reveals that different tasks exhibit distinct frequency energy distributions, and that LLM layers display heterogeneous frequency sensitivities. Motivated by these insights, we propose \textbf{FourierMoE}, which integrates the MoE architecture with the inverse discrete Fourier transform (IDFT) for frequency-aware adaptation. Specifically, FourierMoE employs a frequency-adaptive router to dispatch tokens to experts specialized in distinct frequency bands. Each expert learns a set of conjugate-symmetric complex coefficients, preserving complete phase and amplitude information while theoretically guaranteeing lossless IDFT reconstruction into real-valued spatial weights. Extensive evaluations across 28 benchmarks, multiple model architectures, and scales demonstrate that FourierMoE consistently outperforms competitive baselines in both single-task and multi-task settings while using significantly fewer trainable parameters. These results highlight the promise of spectral-domain expert adaptation as an effective and parameter-efficient paradigm for LLM fine-tuning.
\end{abstract}

\begin{figure*}[t]
\centering
\includegraphics[width=0.8\linewidth]{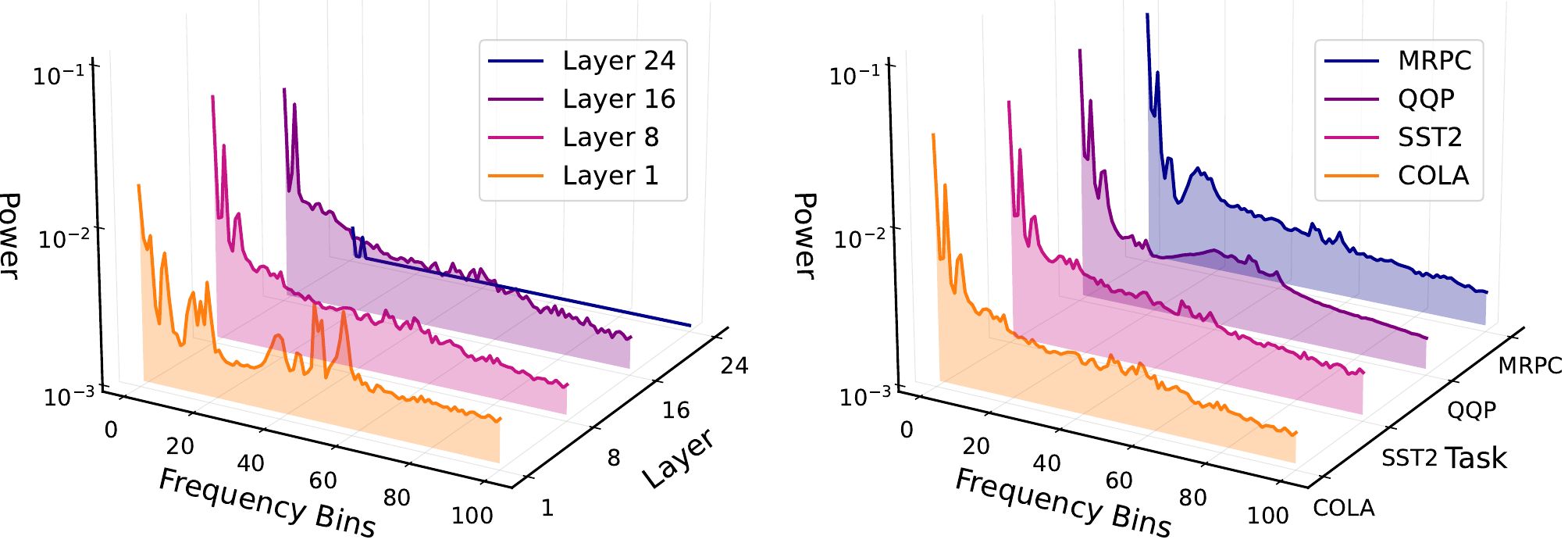}
\caption{Spectral analysis across layers and tasks. \textbf{(Left)} The power spectral density of the RoBERTa-large weights shows layer-wise differences, with early layers exhibiting clear high-frequency spikes and deeper layers showing a progressively smoother spectrum. 
\textbf{(Right)} For different GLUE tasks (CoLA, SST-2, QQP, and MRPC), the spectra of hidden representations from the eighth layer of RoBERTa-large display distinct frequency energy distributions, revealing task-specific preferences.
These observations suggest that effective adaptation requires frequency-specific modulation tailored to different layers and downstream tasks.}
\label{fig:intro_exp1}
\vspace{-1.1em}
\end{figure*}

\section{Introduction}
\label{sec:intro}
The advent of large language models (LLMs) \citep{dubey2024llama,team2024gemma,yang2025qwen3} has reshaped the landscape of natural language processing (NLP). However, as model sizes escalate to hundreds of billions of parameters, full fine-tuning (FFT) becomes increasingly prohibitive due to exorbitant computational and memory overheads. Parameter-efficient fine-tuning (PEFT) \citep{hu2021lora,liu2024dora} has emerged as a crucial paradigm, adapting pretrained models by updating a minimum set of parameters \citep{zi2023delta,lialin2023scaling}. 

While methods like LoRA \citep{hu2021lora} demonstrate efficacy in single-task adaptation, their performance often degrades in multi-task scenarios \citep{li2024mixlora,tian2024hydralora}. We attribute this degradation to two primary challenges: task interference and representation deficiency. Given the data heterogeneity inherent in multi-task learning, distinct tasks often dictate conflicting optimization objectives \citep{yu2020gradient}. When forcing diverse tasks to share a monolithic set of adaptable parameters, gradient updates can exhibit orthogonal or opposing directions, leading to negative transfer \citep{zhang2025more,liu2024moe}.
Furthermore, the restricted parameter budget of the conventional single-PEFT methods, which rely on a single tunable module, may limit their capacity to model the fine-grained and diverse structural features required for simultaneous generalization across multiple tasks \citep{valipour2023dylora,liu2024dora,zhang2022adaptive,park2025llamaduo}. 

This challenge in balancing parameter efficiency with representational capacity for multi-task generalization has motivated recent advances in more dynamic and compositional architectures. A promising direction integrates the mixture-of-experts (MoE) architecture with PEFT, leading to mixture of parameter-efficient experts (MoPE) approaches \citep{zadouri2023pushing,dou2023loramoe}. These methods utilize a router to dynamically select experts instantiated as PEFT modules conditioned on inputs, enabling a flexible allocation of task-specific capacity and potentially reducing task interference \citep{cai2024survey}. Despite their promise, existing MoPE methods suffer from structural redundancy \citep{tian2024hydralora,gao2024higher} due to the lack of explicit mechanisms to encourage orthogonality or diversity among experts. Furthermore, maintaining multiple spatial experts incurs additional parameter overhead, which partially compromises the efficiency goals of PEFT.

In this work, we diverge from the spatial-centric convention and explore solutions through the lens of frequency. We conduct a spectral analysis, as shown in Figure \ref{fig:intro_exp1}, revealing heterogeneity in frequency sensitivity across both model layers and downstream tasks. While early layers exhibit high-frequency spikes, deeper layers manifest a progressively smoother spectrum with attenuated high-frequency components, suggesting a transition from local feature extraction to global semantic integration. Furthermore, different tasks exhibit distinct frequency energy distributions. For instance, MRPC displays a notable energy plateau in the mid-frequency range (bins 15-40), while SST-2 demonstrates a relatively smoother decay. These observations imply that effective adaptation requires frequency-specific modulation tailored to layers and tasks, whereas uniform adaptation may be inefficient and suboptimal.

Building upon these insights, we propose \textbf{FourierMoE} (\textbf{\underline{Fourier}} \textbf{\underline{M}}ixture-\textbf{\underline{o}}f-\textbf{\underline{E}}xperts), a novel method that integrates MoE architecture with the inverse discrete Fourier transform (IDFT) for flexible, frequency-aware adaptation. Our approach incorporates a frequency-adaptive router and a set of experts specialized in distinct spectral bands. Each expert learns a small set of conjugate-symmetric complex coefficients, enabling effective adaptation in the Fourier domain while theoretically ensuring reconstruction into real-valued spatial weights. FourierMoE achieves strong expressivity and theoretical soundness along three dimensions: (1) It partitions the spectrum into distinct frequency bands to facilitate expert specialization. (2) Prior spectral PEFT methods \citep{gaoparameter,kim2025lfma} instantiate the learnable coefficients as real-valued, but we argue that ignoring the imaginary component limits the ability to represent phase and amplitude information. In contrast, FourierMoE learns both real and imaginary components to enable a complete spectral representation. (3) By enforcing conjugate symmetry on the coefficients, our method theoretically ensures that IDFT reconstruction yields real-valued updates, maintaining consistency with the spatial model weights while avoiding information loss.

We validate the effectiveness of FourierMoE across 28 benchmarks spanning commonsense reasoning, math reasoning, image classification, and natural language understanding (NLU). Our method achieves state-of-the-art (SOTA) results compared with FFT, PEFT, and MoPE baselines in both single-task and multi-task setups \citep{li2024mixlora}, while using significantly fewer trainable parameters. 
Our main contributions are summarized as follows:
\begin{itemize}
    \item We propose FourierMoE, a novel MoPE approach that employs frequency-specialized experts with a frequency-adaptive router to facilitate flexible and fine-grained adaptation for LLMs.
    \item We provide a theoretical analysis of spectral PEFT, uncovering the representation limitations of real-only coefficient learning and asymmetric frequency sampling in prior methods \citep{gaoparameter, kim2025lfma}. Derived from this analysis, FourierMoE's design choice is theoretically grounded to address these limitations.
    \item Extensive experiments show that FourierMoE consistently outperforms competitive baselines across various model architectures and scales on 28 benchmarks in both single-task and multi-task scenarios, thereby validating its effectiveness and robustness.
\end{itemize}
\section{Related Work} \label{sec:short_related_work}
PEFT has emerged as a crucial paradigm for adapting LLMs to downstream tasks \citep{lialin2023scaling,han2024parameter}, with numerous methods consistently improving efficiency or adaptation quality \citep{hu2021lora,liu2024dora,zhang2022adaptive,dettmers2023qlora,meng2024pissa}.
Recently, the spectral domain has emerged as a PEFT frontier. FourierFT \citep{gaoparameter} and LFMA \citep{kim2025lfma} learn Fourier coefficients and map them back to the spatial model weights via IDFT. However, they rely on static coefficient positions and neglect the imaginary components, restricting expressive capacity. FourierMoE addresses these constraints by combining MoE with carefully structured coefficients, enhancing flexibility and representational power. Further discussions of related works are presented in Appendix \ref{sec:related_work}.
\section{Methodology}
\label{sec:methodology}
\begin{figure*}[t]
\centering
\includegraphics[width=0.9\linewidth]{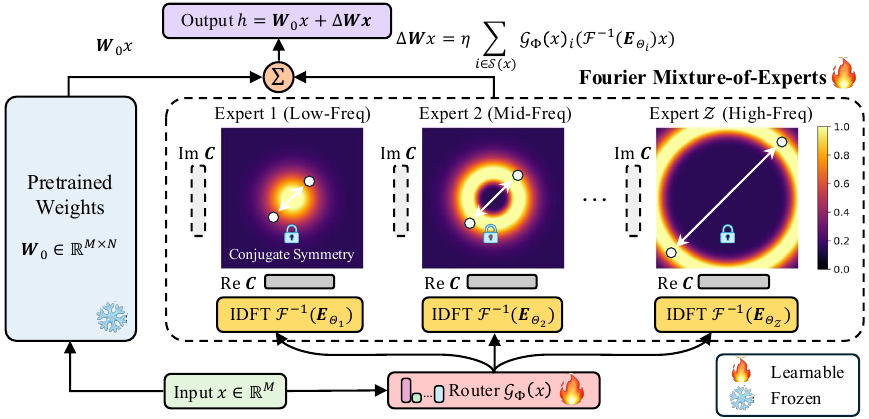}
\caption{The overall framework of FourierMoE, which reparameterizes LLM weight updates $\Delta \mathbf{W}$ in the spectral domain. A frequency-adaptive router $\mathcal{G}_{\Phi}(x)$ dynamically assigns tokens to experts specialized in distinct frequency bands, which mitigates task interference via parameter isolation. Each expert learns conjugate-symmetric complex coefficients, enabling complete spectral representation while theoretically guaranteeing real-valued weight updates after IDFT.}
\label{fig:fouriermoe}
\vspace{-0.6em}
\end{figure*}

In this section, we formally introduce FourierMoE, a novel MoPE framework that reparameterizes the weight adaptation of LLMs into the spectral domain. Our approach is motivated by the observation that LLMs exhibit heterogeneous frequency sensitivities across layers and downstream tasks. By leveraging the orthogonality of the Fourier basis \citep{davis2012fourier} and the divide-and-conquer principle of MoE \citep{masoudnia2014mixture}, FourierMoE achieves fine-grained, frequency-aware optimization. We first outline the spectral reparameterization formulation, followed by the design of frequency-specialized experts, and finally provide a rigorous theoretical analysis of the conjugate symmetry constraints required for lossless adaptation.

\subsection{Spectral Reparameterization}
\label{subsec:spectral_reparam}
Motivated by the low intrinsic dimension of pretrained LLMs \citep{aghajanyan2021intrinsic}, LoRA \citep{hu2021lora} models $\Delta \mathbf{W}$ using a low-rank structure in the spatial domain. In the spectral domain, the informational signals within neural network weights exhibit spectral sparsity, with the important adaptation information concentrated in a small subset of dominant frequency components \citep{kim2025lfma}. In addition, FourierFT empirically shows that sparse spectral coefficients can recover high-quality weight updates \citep{gaoparameter}. Based on these insights and observations, we assume that learning $\Delta \mathbf{W}$ within a sparse spectral subspace is sufficiently expressive for downstream adaptation. 

Let $\mathbf{W}_0 \in \mathbb{R}^{M \times N}$ denote the frozen pretrained weights. We aim to learn an update $\Delta \mathbf{W}$ such that the forward pass becomes $h = (\mathbf{W}_0 + \Delta \mathbf{W})x$. We model any spatial weight update $\Delta \mathbf{W}$ as the inverse discrete Fourier transform (IDFT) of a spectral signal $\mathbf{F} \in \mathbb{C}^{M \times N}$. The transformation from the spectral domain back to the spatial domain is defined as:
\begin{align}
\label{eq:reparam}
\Delta \mathbf{W} = \mathcal{F}^{-1}(\mathbf{F}) &= \frac{1}{MN} \sum_{u=0}^{M-1}\sum_{v=0}^{N-1} \mathbf{F}(u,v) \cdot \mathbf{B}_{u,v}, \nonumber \\
\mathbf{B}_{u,v}(q, y) &= \exp\!{\left(j 2\pi \left( \frac{uq}{M} + \frac{vy}{N} \right)\right)}, \nonumber \\
q \in \{0, \ldots, & M-1\}, \,y \in \{0, \ldots, N-1\} 
\end{align}
where $\mathbf{B}_{u,v} \in \mathbb{C}^{M \times N}$ represents the 2D Fourier basis kernel. Unlike prior approaches \citep{gaoparameter,kim2025lfma} that instantiate $\mathbf{F}(u,v)$ as real-valued, we strictly model $\mathbf{F}(u,v)$ in the complex domain $\mathbb{C}$.

\subsection{Fourier Mixture-of-Experts}

To address the multi-task conflict and limited capacity of existing single-PEFT methods, FourierMoE utilizes a sparse ensemble of $Z$ experts gated by a router $\mathcal{G}_{\Phi}(x)$ parameterized by $\Phi$. 
Instead of aggregating experts in the frequency domain, we transform each expert's spectral representation $\mathbf{E}_i \in \mathbb{C}^{M \times N}$ into the spatial domain to obtain expert-specific updates $\Delta \mathbf{W}_i$. To ensure computational efficiency, we employ a token-level Top-$k$ routing strategy, activating only the subset of experts with the highest gating scores. The final composite update is formulated as:
\begin{equation}
\Delta \mathbf{W} = \sum_{i \in \mathcal{S}(x)} \mathcal{G}_{\Phi}(x)_i \cdot \Delta \mathbf{W}_i, \quad \Delta \mathbf{W}_i = \mathcal{F}^{-1}(\mathbf{E}_i),
\end{equation}
where $\mathcal{S}(x)$ denotes the set of $k$ indices corresponding to the top-$k$ elements of the router output, and $\mathcal{G}_{\Phi}(x)_i$ represents the gating weights for the selected $i$-th experts. This formulation ensures that the gating mechanism selects among the realized spatial weights, allowing for dynamic and input-dependent adaptation.

Notably, although FourierMoE adopts token-level routing \citep{lepikhin2020gshard,fedus2022switch}, the IDFT is not recomputed per token. Specifically, each active expert's spatial update, $\Delta \mathbf{W}_i = \mathcal{F}^{-1}(\mathbf{E}_i)$, is reconstructed once per layer in each forward pass and then reused for all tokens assigned to that expert. Therefore, the reconstruction cost scales with the number of active experts $k$, rather than with the number of tokens.

\paragraph{Band-Limited Spectral Experts.}
Each expert $\mathbf{E}_i$ is designed to specialize in a particular frequency band, capturing distinct features ranging from global semantics (low-frequency) to local syntactic variations (high-frequency). An expert is defined by a learnable parameter set $\Theta_i$ comprising $n$ active frequency coordinates defined by an index set $\Omega_i = \{(u_k, v_k)\}_{k=1}^n$. 
For each active frequency $(u,v) \in \Omega_i$, we learn a complex coefficient $\mathbf{C}_{i}(u,v) = a_{u,v} + j b_{u,v}$. The necessity of learning both real and imaginary components is proven below.

\begin{proposition}[\textbf{Phase-Amplitude Completeness}]
Restricting spectral coefficients to $\mathbb{R}$ (i.e., $b_{u,v}=0$) forces the phase $\Phi_{u,v} = \operatorname{atan2}(b, a)$ to be either $0$ or $\pi$. This constrains the resulting spatial signal to be an even function (symmetric around the origin), rendering the model incapable of representing spatial shifts or asymmetric features in $\Delta \mathbf{W}$. By learning $\mathbf{C}_{i} \in \mathbb{C}$, FourierMoE maintains full expressivity over amplitude $A = \sqrt{a^2+b^2}$ and phase $\Phi$.
\end{proposition}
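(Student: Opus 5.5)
The argument splits into three parts. First, a real coefficient has phase $0$ or $\pi$. Second, a real spectrum forces evenness of the spatial update, which is where the symmetry has to be handled carefully. Third, a complex coefficient realizes any amplitude and phase.

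\textbf{Step 1: phase of a real coefficient.} Set $b_{u,v}=0$ in $\mathbf{C}_i(u,v)=a_{u,v}+j b_{u,v}$. Then $\operatorname{atan2}(0,a)$ equals $0$ when $a>0$ and $\pi$ when $a<0$; the case $a=0$ is a degenerate zero-amplitude coefficient. Equivalently, a real coefficient is $\pm A$ with $A=|a|$, so only the sign carries phase information.

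\textbf{Step 2: the evenness claim.} I will read ``even'' in the sense natural for Eq.~(\ref{eq:reparam}): circular evenness $f(q,y)=f(-q \bmod M,\,-y \bmod N)$ of the real part of the reconstruction.

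The route is the DFT symmetry duality. Write $\mathcal{F}^{-1}(\mathbf{F})(q,y)$ and split $\mathbf{F}=\mathbf{F}_R+j\mathbf{F}_I$. Use the identity $\overline{\mathbf{B}_{u,v}(q,y)}=\mathbf{B}_{u,v}(-q,-y)$. For $\mathbf{F}$ real, this gives
\[
\mathcal{F}^{-1}(\mathbf{F})(-q,-y)=\overline{\mathcal{F}^{-1}(\mathbf{F})(q,y)} .
\]
So the real part of the spatial update is circularly even and the imaginary part is circularly odd.

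FourierMoE keeps the real part, and the conjugate-symmetric construction makes this exact. Combining the two facts, the retained spatial signal is even. Concretely, each contribution is $a_{u,v}\cos\bigl(2\pi(uq/M+vy/N)\bigr)$, a sum of cosines, which is even about the origin.

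\textbf{Step 3: no shifts or asymmetric features.} Apply the shift theorem. Translating $\Delta\mathbf{W}$ by $(q_0,y_0)$ multiplies $\mathbf{F}(u,v)$ by $e^{-j2\pi(uq_0/M+vy_0/N)}$, which is generically non-real. Hence the set of translates of a real-spectrum update is not closed within the real-coefficient family. Any asymmetric target, such as a single off-origin spike $\delta_{q_0,y_0}$, has a non-real spectrum and therefore cannot be represented.

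\textbf{Step 4: full expressivity with complex coefficients.} With $\mathbf{C}_i\in\mathbb{C}$, the map $(a,b)\mapsto(A,\Phi)$ is the polar-coordinate bijection of $\mathbb{R}^2\setminus\{0\}$ onto $(0,\infty)\times(-\pi,\pi]$. Each frequency therefore contributes
\[
A\cos\bigl(2\pi(uq/M+vy/N)+\Phi\bigr).
\]
This includes sine terms, which span the odd component. Under the conjugate-symmetric pairing, every real $M\times N$ matrix is the IDFT of its own Hermitian spectrum, so no expressivity is lost.

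\textbf{Main obstacle.} The hardest step is making Step 2 precise. Strictly, a real spectrum yields a Hermitian-even complex signal, not an even real one. The claim holds only once we specify that the real part is taken, or that conjugate pairs are enforced, and that ``even'' means circular symmetry modulo $(M,N)$. I would state that convention explicitly first and then carry out the computation.
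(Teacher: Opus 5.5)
Your proof is correct and follows the same chain the paper packs into the statement itself, which has no separate proof: a real coefficient has phase $0$ or $\pi$, so each term is a pure cosine and the retained real update is circularly even, which rules out shifted or asymmetric patterns, while complex coefficients give a free polar pair $(A,\Phi)$; your identity $\overline{\mathbf{B}_{u,v}(q,y)}=\mathbf{B}_{u,v}(-q,-y)$, together with your caveat that evenness needs real-part extraction or conjugate pairing and holds modulo $(M,N)$, makes precise what the paper's wording leaves loose. One small fix: your spike example must avoid self-reflected positions such as $(M/2,0)$ with $M$ even, because that spike is itself even (its spectrum is $(-1)^u$) and so is representable with real coefficients.
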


\paragraph{Gaussian Spectral Initialization.}
To enforce spectral specialization, we utilize a Gaussian bandpass filter \citep{filter} to initialize the active indices $\Omega_i$. The probability of assigning a frequency coordinate $(u,v)$ to expert $i$ is governed by:
\begin{equation}\label{eq:gaussian}
P_i(u,v) = \exp\bigg( - \bigg( \frac{\mathcal{D}(u,v)^2 - \mathcal{D}_{c,i}^2}{\mathcal{D}(u,v) \cdot \mathcal{W}_i} \bigg)^2 \bigg),
\end{equation}
where $\mathcal{D}(u,v)$ is the Euclidean distance from the DC component (origin), $\mathcal{D}_{c,i}$ is the center frequency, and $\mathcal{W}_i$ is the bandwidth. This mechanism can ensure minimal spectral overlap, allowing the router to dispatch tokens based on their required frequency resolution.

\subsection{Theoretical Analysis}
\label{subsec:theory}

Given that the LLM weights $\mathbf{W}_0$ and the input $x$ are real-valued, the resulting update $\Delta \mathbf{W}$ must theoretically lie in $\mathbb{R}^{M \times N}$. Moreover, since $\Delta \mathbf{W}$ is a linear combination of expert updates $\Delta \mathbf{W}_i$, each expert update must likewise be real-valued. We provide the theoretical guarantee that ensures $\Delta \mathbf{W}_i \in \mathbb{R}$ without heuristic truncation.

\begin{theorem}[\textbf{Conjugate Symmetry Condition}]
\label{thm:symmetry}
Let $\mathbf{F} \in \mathbb{C}^{M \times N}$ be a spectral matrix (representing any expert $\mathbf{E}_i$). The inverse discrete Fourier transform $\mathbf{S} = \mathcal{F}^{-1}(\mathbf{F})$ satisfies $\mathbf{S} \in \mathbb{R}^{M \times N}$ (i.e., $\mathrm{Im}(\mathbf{S}) = \mathbf{0}$) if and only if $\mathbf{F}$ satisfies Hermitian symmetry:
\begin{equation}
\mathbf{F}(u,v) = \mathbf{F}^*(\langle -u \rangle_M, \langle -v \rangle_N), \quad \forall (u,v),
\end{equation}
where $\langle \cdot \rangle_M$ denotes the modulo $M$ operation and $(\cdot)^*$ denotes the complex conjugate.
\end{theorem}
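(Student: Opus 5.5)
The plan is to exploit two facts: the 2D DFT is invertible, and complex conjugation interacts with the basis kernels $\mathbf{B}_{u,v}$ of \eqref{eq:reparam} in a simple way. The basic identity is
\[
\overline{\mathbf{B}_{u,v}(q,y)} = \exp\!\left(-j2\pi\left(\tfrac{uq}{M}+\tfrac{vy}{N}\right)\right) = \mathbf{B}_{\langle -u\rangle_M,\langle -v\rangle_N}(q,y),
\]
which holds because the exponent is only defined modulo $2\pi j$, so $-u$ may be replaced by $\langle -u\rangle_M$ (and likewise for $v$). With this, I will compute $\overline{\mathbf{S}}$ directly from \eqref{eq:reparam}:
\[
\overline{\mathbf{S}} = \frac{1}{MN}\sum_{u,v} \mathbf{F}^*(u,v)\,\mathbf{B}_{\langle -u\rangle_M,\langle -v\rangle_N}.
\]
The map $(u,v)\mapsto(\langle -u\rangle_M,\langle -v\rangle_N)$ is a bijection of the index grid $\{0,\dots,M-1\}\times\{0,\dots,N-1\}$, indeed an involution. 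Reindexing the sum therefore gives $\overline{\mathbf{S}} = \mathcal{F}^{-1}(\tilde{\mathbf{F}})$, where $\tilde{\mathbf{F}}(u,v) := \mathbf{F}^*(\langle -u\rangle_M,\langle -v\rangle_N)$.

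\textbf{Sufficiency.} If $\mathbf{F}$ is Hermitian symmetric, then $\tilde{\mathbf{F}} = \mathbf{F}$. Hence $\overline{\mathbf{S}} = \mathcal{F}^{-1}(\mathbf{F}) = \mathbf{S}$, which means $\mathrm{Im}(\mathbf{S}) = \mathbf{0}$.

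\textbf{Necessity.} If $\mathbf{S}$ is real, then $\mathcal{F}^{-1}(\tilde{\mathbf{F}}) = \overline{\mathbf{S}} = \mathbf{S} = \mathcal{F}^{-1}(\mathbf{F})$. I then need injectivity of $\mathcal{F}^{-1}$. I will get it from orthogonality of the basis:
\[
\sum_{q,y}\mathbf{B}_{u,v}(q,y)\overline{\mathbf{B}_{u',v'}(q,y)} = MN\,\delta_{u u'}\delta_{v v'},
\]
which follows from the geometric-sum identity $\sum_{q=0}^{M-1} e^{j2\pi (u-u')q/M} = M\,\delta_{uu'}$ for $u,u'\in\{0,\dots,M-1\}$. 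Taking the inner product of $\mathbf{S}$ with $\mathbf{B}_{u,v}$ recovers $\mathbf{F}(u,v)$ explicitly; this is the forward DFT. It follows that $\tilde{\mathbf{F}} = \mathbf{F}$, which is exactly the stated symmetry.

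The only mildly delicate step is the index bookkeeping: checking that negation modulo $M$ and $N$ is a well-defined bijection of the grid, including the fixed points $u=0$ and $u=M/2$ when $M$ is even. That same bookkeeping is what justifies the reindexing of the sum. Everything else is routine. As a side remark I could note a consequence of the symmetry: at self-conjugate indices, such as the DC component, $\mathbf{F}(u,v)$ must be real.
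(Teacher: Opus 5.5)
Your proof is correct, and it takes a different route from the paper's, most importantly in the necessity direction.

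The paper works entrywise. It splits the IDFT sum for $\mathbf{S}(q,y)$ into the self-conjugate indices (DC and Nyquist) and the pairs $\{(u,v),(\langle -u\rangle_M,\langle -v\rangle_N)\}$. It then shows that under Hermitian symmetry each pair contributes $2\operatorname{Re}\{\mathbf{F}(u,v)e^{j\theta_{u,v}}\}$. For the converse it only asserts that ``the imaginary terms fail to cancel.''

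You instead conjugate the whole synthesis formula. Using $\overline{\mathbf{B}_{u,v}} = \mathbf{B}_{\langle -u\rangle_M,\langle -v\rangle_N}$ and the involution on the index grid, you obtain the single identity $\overline{\mathbf{S}} = \mathcal{F}^{-1}(\tilde{\mathbf{F}})$, and both directions follow from it.

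Your sufficiency direction is the same cancellation idea, made global. A side benefit is that the self-conjugate indices need no separate treatment. The paper leaves implicit that at those indices symmetry forces $\mathbf{F}$ to be real and the kernel equals $\pm 1$.

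The real gain is in necessity. The paper's claim that any symmetry violation leaves a nonzero imaginary part amounts to saying that distinct Fourier kernels cannot cancel one another. That is exactly the injectivity of $\mathcal{F}^{-1}$, which the paper never establishes. Your orthogonality relation $\sum_{q,y}\mathbf{B}_{u,v}(q,y)\overline{\mathbf{B}_{u',v'}(q,y)} = MN\,\delta_{uu'}\delta_{vv'}$ supplies precisely that missing step.

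Your argument is therefore a more complete proof of the stated equivalence. It costs only an appeal to the standard invertibility of the DFT, in place of a purely termwise computation.
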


\begin{proof}
Consider the IDFT definition for an entry $\mathbf{S}(q,y)$:
\begin{align}
\mathbf{S}(q,y) &= \frac{1}{MN} \sum_{u=0}^{M-1}\sum_{v=0}^{N-1} \mathbf{F}(u,v) e^{j \frac{2\pi uq}{M}} e^{j \frac{2\pi vy}{N}}.
\end{align}
Let $\theta_{u,v} = 2\pi(\frac{uq}{M} + \frac{vy}{N})$. We split the summation into the DC component, the Nyquist components (if any), and pairs of indices $(u,v)$ and their reflection $(u', v') = (\langle -u \rangle_M, \langle -v \rangle_N)$.
Assume the symmetry condition holds: $\mathbf{F}(u', v') = \mathbf{F}^*(u,v)$. The contribution of this pair to the sum is:
\begin{align}
T &= \mathbf{F}(u,v)e^{j\theta_{u,v}} + \mathbf{F}(u',v')e^{j\theta_{u',v'}} \\
  &= \mathbf{F}(u,v)e^{j\theta_{u,v}} + \mathbf{F}^*(u,v)e^{j(2\pi k - \theta_{u,v})} \, (k \in \mathbb{Z}) \\
  &= \mathbf{F}(u,v)e^{j\theta_{u,v}} + \left(\mathbf{F}(u,v)e^{j\theta_{u,v}}\right)^* \\
  &= 2 \operatorname{Re}\left\{ \mathbf{F}(u,v)e^{j\theta_{u,v}} \right\}.
\end{align}
Since $2 \operatorname{Re}\{\cdot\}$ is strictly real, the total sum $\mathbf{S}(q,y)$ comprises only the real terms. Conversely, if symmetry is violated, the imaginary terms fail to cancel, resulting in $\mathbf{S}(q,y) \in \mathbb{C}$.
\end{proof}

\paragraph{Necessity of Symmetry Constraints.}
Prior works \citep{gaoparameter,kim2025lfma} often neglect the conjugate symmetry, computing $\Delta \mathbf{W}_{trunc} = \operatorname{Re}(\mathcal{F}^{-1}(\mathbf{F}_{unsym}))$. We demonstrate that this leads to a representation error.

\begin{corollary}[\textbf{Truncation Error Bound}]
Let $\mathbf{F}_{unsym}$ be a spectral matrix violating Theorem \ref{thm:symmetry}. The effective weight update for the model is $\mathbf{W}_{eff} = \operatorname{Re}(\mathcal{F}^{-1}(\mathbf{F}_{unsym}))$. The information loss due to the truncation of imaginary parts, defined as the energy of the discarded signal, is given by:
\begin{align}
\mathcal{L}_{error} &= \| \mathcal{F}^{-1}(\mathbf{F}_{unsym}) - \mathbf{W}_{eff} \|_F^2 \nonumber \\
& = \sum_{q,y} \left( \operatorname{Im}(\mathcal{F}^{-1}(\mathbf{F}_{unsym}))_{q,y} \right)^2.
\end{align}
By Parseval's theorem \citep{oppenheim1999discrete}, the truncation error can be equivalently characterized in the spectral domain, up to a DFT-dependent normalization constant:
\begin{equation}
\mathcal{L}_{error} \propto \sum_{u,v} \| \mathbf{F}_{unsym}(u,v) - \mathbf{F}^*_{unsym}(\langle -u \rangle_M, \langle -v \rangle_N) \|^2.
\end{equation}
\end{corollary}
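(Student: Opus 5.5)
Write $\mathbf{S} = \mathcal{F}^{-1}(\mathbf{F}_{unsym}) \in \mathbb{C}^{M\times N}$ and split it into real and imaginary parts. The first equality is then immediate: $\mathbf{S} - \mathbf{W}_{eff} = \mathbf{S} - \operatorname{Re}(\mathbf{S}) = j\,\operatorname{Im}(\mathbf{S})$. Taking the squared Frobenius norm, $|j\,x|^2 = x^2$ for real $x$, so $\mathcal{L}_{error} = \sum_{q,y} (\operatorname{Im}\mathbf{S}_{q,y})^2$. The substantive part is to identify the DFT of $\operatorname{Im}(\mathbf{S})$ and then apply Parseval.

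\textbf{Spectrum of the imaginary part.} Write $\operatorname{Im}(\mathbf{S}) = (\mathbf{S} - \mathbf{S}^*)/(2j)$, where $^*$ denotes entrywise conjugation. I will establish the reflection identity $\mathbf{S}^* = \mathcal{F}^{-1}(\widetilde{\mathbf{F}})$ with $\widetilde{\mathbf{F}}(u,v) = \mathbf{F}^*(\langle -u\rangle_M, \langle -v\rangle_N)$. To prove it, conjugate the IDFT sum in \eqref{eq:reparam}, which turns $e^{j\theta_{u,v}}$ into $e^{-j\theta_{u,v}}$. Then reindex $(u,v)\mapsto(\langle -u\rangle_M, \langle -v\rangle_N)$; this is a bijection of the index grid, and exponentials are $2\pi$-periodic, exactly as in the proof of Theorem~\ref{thm:symmetry}. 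By linearity of $\mathcal{F}^{-1}$,
\begin{equation*}
\operatorname{Im}(\mathbf{S}) = \mathcal{F}^{-1}\!\left(\tfrac{1}{2j}\big(\mathbf{F}_{unsym} - \widetilde{\mathbf{F}}_{unsym}\big)\right).
\end{equation*}
This also gives a quantitative form of Theorem~\ref{thm:symmetry}: the imaginary part vanishes exactly when the Hermitian defect $\mathbf{F} - \widetilde{\mathbf{F}}$ vanishes.

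\textbf{Parseval step.} For the normalization in \eqref{eq:reparam}, Parseval reads $\|\mathcal{F}^{-1}(\mathbf{G})\|_F^2 = \frac{1}{MN}\sum_{u,v}|\mathbf{G}(u,v)|^2$. This follows from orthogonality of the basis kernels, $\langle \mathbf{B}_{u,v}, \mathbf{B}_{u',v'}\rangle = MN\,\delta_{uu'}\delta_{vv'}$. Applying it with $\mathbf{G} = \frac{1}{2j}(\mathbf{F}_{unsym}-\widetilde{\mathbf{F}}_{unsym})$ gives
\begin{equation*}
\mathcal{L}_{error} = \frac{1}{4MN}\sum_{u,v}\big|\mathbf{F}_{unsym}(u,v) - \mathbf{F}^*_{unsym}(\langle -u\rangle_M,\langle -v\rangle_N)\big|^2 .
\end{equation*}
This is the claimed proportionality with the explicit constant $1/(4MN)$.

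The only step needing care is the reflection identity: the index bookkeeping for $\langle -u\rangle_M$ at $u=0$ and at the Nyquist index, and checking that the reindexing is a bijection. Once that identity holds, the rest is linearity and Parseval.
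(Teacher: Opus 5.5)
Your proof is correct and follows the same route as the paper, which treats the first equality as immediate and gets the second by a bare appeal to Parseval's theorem. Your reflection identity $\mathbf{S}^* = \mathcal{F}^{-1}(\widetilde{\mathbf{F}})$ supplies the step the paper leaves implicit, namely that the spectrum of $\operatorname{Im}(\mathbf{S})$ is $(\mathbf{F}_{unsym}-\widetilde{\mathbf{F}}_{unsym})/(2j)$, and it fixes the unspecified normalization constant as $1/(4MN)$.
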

Without enforcing conjugate symmetry, updates fall into imaginary subspace that are discarded, causing information loss.
FourierMoE enforces the constraint in Theorem \ref{thm:symmetry} for each expert $\mathbf{E}_i$ during the learning of coefficients $\Theta_i$, ensuring the update contributed by each expert is strictly real-valued, eliminating the truncation error.

\subsection{Optimization Objective}
The forward pass for the $l$-th layer is formulated as:
\begin{equation}\label{eq:optimization}
h = \mathbf{W}_0 x + \eta \sum_{i \in \mathcal{S}(x)} \mathcal{G}_{\Phi}(x)_i \left(\mathcal{F}^{-1}(\mathbf{E}_{\Theta_i}) x\right),
\end{equation}
where $\eta$ denotes a predefined scaling factor, $\Theta = \{\Theta_i\}_{i=1}^Z$ represents the ensemble of learnable spectral coefficients for all experts, and $\Phi$ denotes the trainable parameters of the frequency-adaptive router $\mathcal{G}$. 
We optimize the joint parameter set $\{\Theta, \Phi\}$ by minimizing a composite objective function that balances task performance with architectural stability. Formally, the optimization problem is defined as the minimization of the expected loss over the training distribution $\mathcal{D}_{train}$:
\begin{align}\label{eq:loss}
\min_{\Theta, \Phi} \mathbb{E}_{x \sim \mathcal{D}_{train}} \left[ \mathcal{L}_{task}(x; \Theta, \Phi) + \lambda \mathcal{L}_{aux}(x; \Phi) \right],
\end{align}
where $\mathcal{L}_{task}$ is the standard autoregressive cross-entropy loss. To mitigate the risk of expert collapse, a common failure mode in sparse MoE where the router over-selects a small subset of experts, we define the auxiliary load-balancing loss $\mathcal{L}_{aux}$ following prior works \citep{lepikhin2020gshard,fedus2022switch}:
\begin{align}\label{eq:aux_loss}
\mathcal{L}_{aux} = Z \sum_{i=1}^Z f_i \cdot P_i, \quad f_i = \frac{1}{B} \sum_{x \in \mathcal{B}} \mathbbm{1}\{i \in \mathcal{S}(x)\}
\end{align}
where $f_i$ is the fraction of tokens in a batch $\mathcal{B}$ assigned to expert $i$ (i.e., expert $i$ is in the top-$k$ set for $x$), and $P_i$ is the mean routing probability for expert $i$ across the batch. This formulation balances expert optimization for broad spectral coverage and diverse representations. For clarity, we present the detailed algorithm in Appendix \ref{sec:algorithm}.
\section{Experiments}\label{sec:experiments}
\subsection{Experiment Setting}
\textbf{Baselines.} We compare FourierMoE against FFT, single-PEFT, and MoPE baselines. For a fair and comprehensive evaluation, we follow the configurations used in prior works and reuse their reported results. We select competitive single-PEFT methods as baselines, including LoRA \citep{hu2021lora}, rsLoRA \citep{kalajdzievski2023rank}, DoRA \citep{liu2024dora}, PiSSA \citep{meng2024pissa}, MiLoRA \citep{wang2024milora}, KaSA \citep{wang2024kasa}, LoRA-Dash \citep{si2025unleashing}, NEAT \citep{zhong2024neat}, TopLoRA \citep{li2025beyond}, MELoRA \citep{ren2024melora}, and FourierFT \citep{gaoparameter}. In addition, we include comparisons against several MoPE baselines: MoLoRA \citep{zadouri2023pushing}, AdaMoLE \citep{liu2024adamole}, HydraLoRA \citep{tian2024hydralora}, and GOAT \citep{fan2025make}. The detailed introduction of baselines is presented in Appendix \ref{sec:baselines}.

\textbf{Model and Datasets.} We evaluate the efficiency of FourierMoE across multi-task and single-task scenarios spanning NLP and computer vision (CV) domains. For the multi-task setup \citep{li2024mixlora}, we train models on mixed tasks and evaluate them on the individual test set, including: (1) Commonsense reasoning: we fine-tune LLaMA-2 7B \citep{touvron2023llama2} and Gemma 7B \citep{team2024gemma} on the Commonsense170K dataset \citep{hu2023llm}, and evaluate on the test sets of its constituent subsets.
(2) Math reasoning: we fine-tune LLaMA-3 8B \citep{grattafiori2024llama} and Qwen2.5-14B \citep{yang2025qwen3} on Math10K \citep{hu2023llm}, and evaluate them on GSM8K \citep{cobbe2021training}, SVAMP \citep{patel2021nlp}, MultiArith \citep{roy2016solving}, AddSub \citep{hosseini2014learning}, AQuA \citep{ling2017program}, and SingleEq \citep{koncel2015parsing}.
For the single-task setup, we train and evaluate models on each task, including: (3) Image classification: we fine-tune and evaluate CLIP ViT-B/32 \citep{radford2021learning} on Cars \citep{krause20133d}, DTD \citep{cimpoi2014describing}, EuroSAT \citep{helber2019eurosat}, GTSRB \citep{houben2013detection}, RESISC45 \citep{cheng2017remote}, SUN397 \citep{xiao2010sun}, and SVHN \citep{netzer2011reading}.
(4) Natural language understanding: we fine-tune and evaluate RoBERTa-large \citep{liu2019roberta} on the CoLA, SST-2, MRPC, QQP, MNLI, QNLI, and RTE subsets from GLUE \citep{wang2018glue}. We present detailed statistics and hyperparameter configurations for all datasets and benchmarks in Appendix \ref{sec:statistics_dataset} and Appendix \ref{sec:train_details}. All experiments are conducted on NVIDIA H100 (80GB) GPUs.  

\begin{table*}[ht]
\caption{Performance comparison of single-PEFT and MoPE methods on eight commonsense reasoning benchmarks using LLaMA-2 7B and Gemma 7B in a multi-task setting. Accuracy is reported for all benchmarks. \textbf{Bold} denotes the best result.}
\label{tab:commonsense}
\scriptsize
\centering
\resizebox{\linewidth}{!}{%
\begin{tabular}{lccccccccccc}
\toprule
\textbf{Model} & \textbf{Method} & \textbf{\# Params(\%)} & \textbf{BoolQ} & \textbf{PIQA} & \textbf{SIQA} & \textbf{HellaSwag} & \textbf{WinoGrande} & \textbf{ARC-e} & \textbf{ARC-c} & \textbf{OBQA} & \textbf{Average} \\
\midrule
ChatGPT & / & / & 73.10 & 85.40 & 68.50 & 78.50 & 66.10 & 89.80 & 79.90 & 74.80 & 77.01 \\
\midrule
\multirow{11}{*}{LLaMA-2 7B}
& LoRA & 0.84 & 69.80 & 79.90 & 79.50 & 83.60 & 82.60 & 79.80 & 64.70 & 81.00 & 77.61 \\
& DoRA & 0.84 & 71.80 & 83.10 & 79.90 & 89.10 & 83.00 & 84.50 & 71.00 & 81.20 & 80.45 \\
& PiSSA & 0.84 & 67.60 & 78.10 & 78.40 & 76.60 & 78.00 & 75.80 & 60.20 & 75.60 & 73.79 \\
& MiLoRA & 0.84 & 67.60 & 83.80 & 80.10 & 88.20 & 82.00 & 82.80 & 68.80 & 80.60 & 79.24 \\
& LoRA-Dash & 0.84 & 71.00 & 75.70 & 79.30 & 91.10 & 78.60 & 84.20 & 69.80 & 78.80 & 78.56 \\
& NEAT & 0.84 & 71.70 & 83.90 & 80.20 & 88.90 & 84.30 & 86.30 & 71.40 & 83.00 & 81.21 \\
& KaSA  & 0.84 & 73.60 & 84.40 & 80.20 & 91.50 & 84.50 & 84.70 & 72.10 & 81.20 & 81.53 \\
& HydraLoRA & 0.84 & 72.78	&84.06	&79.68	&80.34	& 86.66	&87.12	&72.35	&86.00 &81.12 \\
& MoLoRA & 0.96 & 73.15	&83.68	&80.09	&74.57	&85.95	&87.33	&72.53 & 86.20 & 80.43 \\
& GOAT & 0.96 & 73.60 & 83.95	& 80.50	& 87.12	& 85.00	& 87.79 &76.88 & 87.00 & 82.73 \\
\rowcolor[RGB]{211, 244, 225}
& \textbf{FourierMoE} & 0.06 & \textbf{73.73} & \textbf{84.60} & \textbf{81.27} & \textbf{92.33} & \textbf{86.82} & \textbf{88.24} & \textbf{77.21} & \textbf{87.40} & \textbf{83.95} \\
\midrule
\multirow{5}{*}{Gemma 7B}
& LoRA & 0.14 & 75.17 & 88.74 & 77.58 & 95.21 & 89.11 & 92.93 & 84.73 & 88.00 & 86.43 \\
& DoRA & 0.14 & 73.49 & 90.44 & 79.19 & 95.03 & 90.00 & 93.79 & 84.73 & 88.67 & 86.92 \\
& MELoRA & 0.14 & 73.49 & 89.50 & 79.99 & 94.60 & 89.90 & 93.18 & 84.30 & 89.40 & 86.79 \\
& HydraLoRA & 0.16 & 72.14 & 89.21 & 81.30 & 95.11 & 89.45 & 94.56 & 85.32 & 89.00 & 87.01\\ 
\rowcolor[RGB]{211, 244, 225} & \textbf{FourierMoE} & 0.03 & \textbf{75.26} & \textbf{91.05} & \textbf{82.55} & \textbf{95.81} & \textbf{90.21} & \textbf{95.44} & \textbf{85.62} & \textbf{89.60} & \textbf{88.19}  \\
\bottomrule
\end{tabular}
}
\end{table*}
\begin{table*}[!t]
    \vspace{0.2em}
    \centering 
    \caption{Performance comparison of single-PEFT and MoPE methods on six math reasoning benchmarks using LLaMA-3 8B and Qwen2.5-14B in a multi-task setting. Accuracy is reported for all benchmarks. \textbf{Bold} denotes the best result.}
    \resizebox{0.9\textwidth}{!}{
    \begin{tabular}{lccccccccccc}
        \toprule
        \textbf{Model} & \textbf{Method} & \textbf{\# Params(\%)} & \textbf{AddSub} & \textbf{AQuA} & \textbf{GSM8K} & \textbf{MultiArith} & \textbf{SingleEQ} & \textbf{SVAMP} & \textbf{AVG.} \\ 
        \midrule 
         \multirow{5}{*}{LLaMA-3 8B}
         & LoRA & 0.12 & 84.56 & 25.72 & 57.22 & 91.22 & 92.26 & 70.17 & 70.19 \\
         & DoRA & 0.12 & 85.95 & 26.19 & 56.52 & 89.67 & \textbf{92.62} & 70.40 & 70.23 \\
         & MELoRA & 0.12 & 85.82 & 24.41 & 55.34 & 87.83 & 91.54 & 71.20 & 69.36 \\
         & HydraLoRA & 0.11 & 86.08 & 25.98 & 55.50 & 91.00 & 91.14 & 68.10 & 69.63 \\ 
         \rowcolor[RGB]{211, 244, 225}
         & \textbf{FourierMoE} & 0.01 & \textbf{87.85} & \textbf{33.07} & \textbf{60.80} & \textbf{93.16}  & 91.54  & \textbf{73.00} & \textbf{73.24} \\ 
         \midrule
         \multirow{6}{*}{Qwen2.5-14B}
         & LoRA & 0.12 & 91.90 & 34.65 & 74.37 & 96.33 & 92.91 & 86.40 & 79.43 \\
         & DoRA & 0.13 & 92.41 & 36.22 & 75.92 & 96.39 & 92.39 & 86.80 & 80.02 \\
         & TopLoRA & 0.10 & 91.31 & 35.96 & 77.31 & \textbf{97.67} & 93.44 & 87.43 & 80.52 \\
         & MELoRA & 0.12 & 92.91 & 33.86 & 75.89 & 97.33 & 92.13 & 85.60 & 79.62 \\
         & HydraLoRA & 0.12 & 92.41 & 36.61 & 76.32 & 96.22 & 92.45 & 86.97 & 80.16 \\
         \rowcolor[RGB]{211, 244, 225}
         & \textbf{FourierMoE} &  0.05 & \textbf{94.68} & \textbf{41.34} & \textbf{78.09} & 97.33  & \textbf{95.87}  & \textbf{91.80} & \textbf{83.19} \\
        \bottomrule
    \end{tabular}}
    \label{tab:math_reasoning}
    \vspace{-0.275em}
\end{table*}
\subsection{Main Results}
\textbf{Commonsense Reasoning.} As shown in Table \ref{tab:commonsense}, FourierMoE consistently surpasses the baselines across all benchmarks and models, while using minimal trainable parameters. On LLaMA-2 7B, it outperforms the strongest MoPE baseline, GOAT, by 1.22\%, while using 16$\times$ fewer trainable parameters. On Gemma 7B, our method achieves the best performance of 88.19 with 0.03\% of parameters, demonstrating its superior efficiency in the multi-task setup. 

\textbf{Math Reasoning.} We evaluate FourierMoE on larger LLMs to validate its scalability. As presented in Table \ref{tab:math_reasoning}, our method consistently attains the highest average accuracy across both backbone models. On LLaMA-3 8B, it improves accuracy by 3.01\% over the strongest single-PEFT baseline (DoRA) and by 3.61\% over the best MoPE method (HydraLoRA). It also achieves notable gains over the second-best results on AQuA (+6.88\%), GSM8K (+3.58\%), and SVAMP (+1.8\%). On Qwen2.5-14B, FourierMoE achieves SOTA performance across 5/6 benchmarks, demonstrating effectiveness and robust scalability across models.

\begin{table*}[ht] 
\caption{Performance comparison of FFT, single-PEFT, and MoPE methods on seven image classification benchmarks using CLIP ViT-B/32 in a single-task setting. Accuracy is reported for all benchmarks. \textbf{Bold} denotes the best result.}
\scriptsize
\centering
\resizebox{\linewidth}{!}{%
\begin{tabular}{lcccccccccc}
\toprule
\textbf{Method} & \textbf{\# Params (\%)} & \textbf{Cars} & \textbf{DTD} & \textbf{EuroSAT} & \textbf{GTSRB} & \textbf{RESISC45} & \textbf{SUN397} & \textbf{SVHN} & \textbf{Average} \\
\midrule
FFT  & 100 & 60.33 & 73.88 & 98.96 & 98.30 & 93.65 & 53.84 & 96.78 & 82.25 \\
FFT MoE & 770 & 66.39 & 75.53 & 98.59 & 98.50 & 94.38 & 60.34 & 97.09 & 84.40 \\
\midrule
LoRA (rank8)  & 1.49 & 41.02 & 70.15 & 98.66 & 96.51 & 90.38 & 47.51 & 95.39 & 77.09 \\
LoRA (rank16) & 2.99 & 46.51 & 72.07 & {98.74} & 98.04 & 92.08 & 51.63 & 96.00 & 79.30 \\
LoRA (rank32) & 5.98 & 50.13 & 72.87 & 98.88 & 98.13 & 92.87 & 53.65 & 96.55 & 80.44 \\
DoRA & 1.49 & 40.75 & 71.91 & 98.89 & 97.71 & 90.19 & 47.54 & 95.46 & 77.49  \\
PiSSA & 1.49 & 40.41 & 69.62 & 98.48 & 95.84 & 90.58 & 47.21 & 95.84 & 76.85 \\
MiLoRA & 1.49 & 39.77	& 70.48	& 98.19	& 97.52 &	89.92 &	45.38 & 95.49 & 76.68 \\
FourierFT & 0.22 & 39.20 & 70.58 & 98.02 & 97.93 & 91.58 & 61.92 & 91.14 & 78.62 \\
MoLoRA & 2.24  & 50.83 & {73.51} & 98.63 & 97.72 & 92.58 & 52.55 & 96.00 & 80.26 \\
AdaMoLE & 2.33 & 49.47 & 71.65 & 98.52 & 97.73 & 91.95 & 52.29 & 95.82 & 79.63 \\
HydraLoRA & 1.58 & 48.42 & 72.18 & 98.40 & 97.28 & 92.93 & 51.80 & 96.06 & 79.58  \\
GOAT &2.24& 53.50 & 75.32 & {98.82} & 98.17 & 93.46 & 	54.53	& 96.62 & 81.49 \\
\rowcolor[RGB]{211, 244, 225}
\textbf{FourierMoE} & 0.42 & \textbf{64.44} & \textbf{77.18} & \textbf{98.93} & \textbf{98.28} & \textbf{93.65} & \textbf{66.41} & \textbf{96.78} & \textbf{85.10} \\
\bottomrule
\end{tabular}
}
\label{tab:image_classification}
\end{table*}
\textbf{Image Classification.} Table \ref{tab:image_classification} demonstrates that FourierMoE surpasses all PEFT and MoPE baselines across seven CV benchmarks. Notably, it outperforms FFT by 2.85\% and FFT MoE by 0.7\%, while reducing trainable parameters from 100\% and 770\% to just 0.42\%, significantly lowering adaptation costs. The consistent improvements underscore the versatility and cross-modal generalizability of FourierMoE beyond NLP tasks. In addition, our method consistently outperforms the spatial MoPE baselines, including MoLoRA, AdaMoLE, HydraLoRA, and GOAT, indicating the effectiveness of the spectral reparameterization. 

\begin{table*}[!t] 
\caption{Performance comparison of FFT, single-PEFT, and MoPE methods on seven benchmarks from GLUE using RoBERTa-large in a single-task setting. Accuracy is reported for all benchmarks. \textbf{Bold} denotes the best result.}
\label{tab:nlu}
\scriptsize
\centering
\resizebox{0.9\linewidth}{!}{%
\begin{tabular}{lcccccccccc}
\toprule
\textbf{Method} & \textbf{\# Params (\%)} & \textbf{CoLA} & \textbf{SST-2} & \textbf{MRPC} & \textbf{QQP} & \textbf{MNLI} & \textbf{QNLI} & \textbf{RTE} & \textbf{Average} \\
\midrule
FFT  & 100 & 84.27 & 95.98 & 85.29 & 91.58 & 89.83 & 94.49 & 84.84 & 89.47 \\
FFT MoE & 698 & 86.02	&96.22	&85.05	&92.20	&90.20	&95.10	&84.48 & 89.90 \\
\midrule
LoRA & 4.00 & 83.41 & 95.64 & 83.33 & 90.06 & 89.00 & 93.28 & 84.47 & 88.46 \\
DoRA & 4.00 & {85.33} & 95.99 & 84.07 & 91.24 & {89.52} & 93.54 & 84.48 & 89.17 \\
PiSSA & 4.00 & 69.12 & 95.98 & 82.84 & 91.24 & 88.94 & 93.59 & 73.29 & 85.00 \\
MiLoRA & 4.00 & {84.65} & {96.10} &{86.02} & {91.33} & {89.51} & {94.12} & {84.83} & {89.51} \\
rsLoRA & 4.00 & 83.51 & 95.98 & {86.02} & 90.75 & 88.97 & 93.84 & 84.12 & 89.03 \\
FourierFT & 0.42 & 78.01 & 95.20 & 86.27 & 85.48 & 87.19 & 91.59 & 83.39 & 86.73 \\ 
MoLoRA & 4.50 & 83.94 & 96.10 & 87.75 & 91.45 & 89.36 & 93.90 & 84.11 & 89.52 \\
AdaMoLE  & 4.56 & 83.99 & 95.76 & 86.03 & 91.48 & 89.21 & 93.64 & 83.75 & 89.12 \\
GOAT & 4.50 & \textbf{86.86} & 96.21 & 84.55 & 91.40 & 89.55 & 94.19 & 85.56 & 89.76 \\
HydraLoRA & 2.75 & 83.89 & 95.52 & 85.04 & 91.02 & 89.34 & 93.87 & 81.22 & 88.56 \\ 
\rowcolor[RGB]{211, 244, 225} \textbf{FourierMoE} & 0.42 & 85.52 & \textbf{96.33} & \textbf{91.91} & \textbf{91.52} & \textbf{90.33} & \textbf{94.66} & \textbf{89.53} & \textbf{91.40} \\
\bottomrule
\end{tabular}
}
\end{table*}
\textbf{Natural Language Understanding.} Table \ref{tab:nlu} shows that our method outstrips all baselines across 6/7 benchmarks using 0.42\% of trainable parameters. It exceeds the best single-PEFT baseline, MiLoRA (89.51), by 1.89\%, and the best MoPE method, GOAT (89.76), by 1.64\%. Moreover, FourierMoE outperforms the spectral PEFT method FourierFT (91.40 vs. 86.73) under the same parameter budget. These outcomes suggest the efficiency of our proposed method in the single-task fine-tuning scenario.

\subsection{In-depth Analysis and Insights}
In the following sections, we present an in-depth analysis of FourierMoE, covering its component contributions, expert scalability, expert frequency, hyperparameter sensitivity, and computation efficiency.

\textbf{Ablation Study.} We assess the contribution of each component of FourierMoE across the Cars, DTD, and SUN397 benchmarks using CLIP ViT-B/32. Specifically, the w/o imaginary part variant learns only the real component of the Fourier coefficients, while the w/o real part variant learns only the imaginary component. The w/o frequency bias variant randomly samples the frequency coordinates for each expert. The w/o conjugate symmetry variant removes the conjugate symmetry constraints. As shown in Figure \ref{fig:ablation_study}, the full implementation of FourierMoE consistently achieves the best performance across the evaluated benchmarks, whereas ablating any component leads to performance degradation, validating the effectiveness of the core design choices.

\textbf{Expert Scalability.} We evaluate the scalability of FourierMoE in terms of coefficient amount, expert amount, and activation count on MRPC, QNLI, and RTE using RoBERTa-large. As illustrated in Figure \ref{fig:expert_scaling} (left), the model performance initially scales with the coefficient count and peaks at $n=1008$, beyond which further increases lead to performance degradation. A similar trend is observed for the total number of experts $Z$ (center), where accuracy improves up to $Z=8$ before declining. Regarding the expert activation count $k$ (right), our results indicate that activating the top-2 experts yields the best performance, while excessively sparse or dense activation tends to be suboptimal. These findings indicate that FourierMoE balances performance with structural sparsity, achieving high parameter efficiency.

\begin{figure*}[t]
	\centering
    \includegraphics[width=0.85\linewidth]{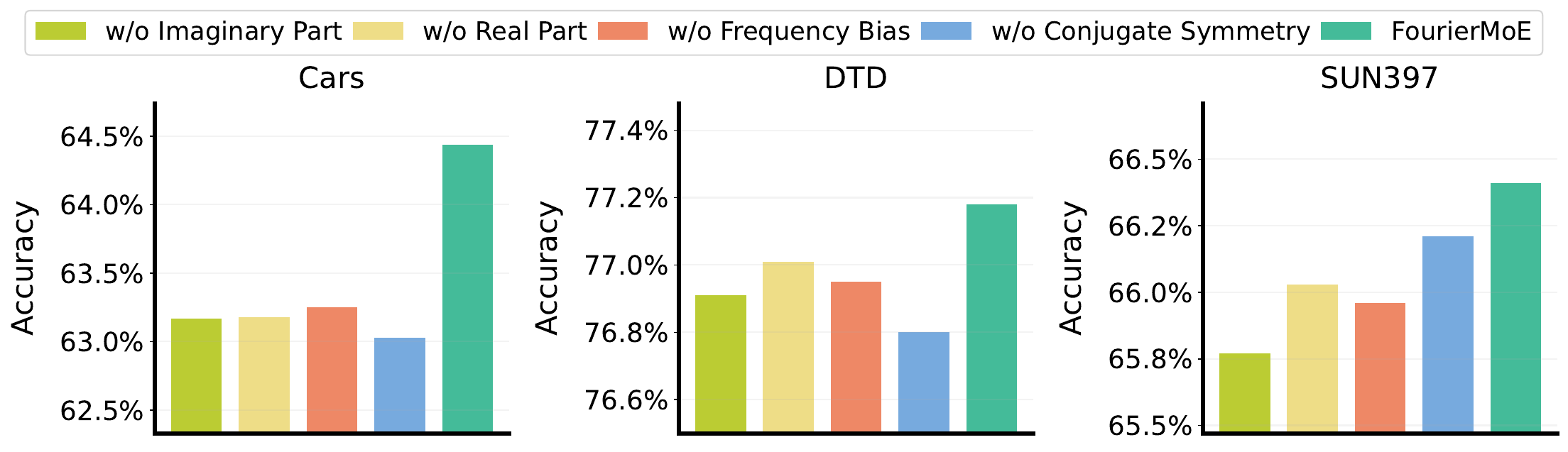}
	\caption{Component ablation study of FourierMoE on Cars, DTD, and SUN397. Accuracy is reported to assess the contribution of each component. Removing any component results in a performance drop compared to the full FourierMoE implementation.}
\label{fig:ablation_study}
\vspace{-0.5em}
\end{figure*}

\begin{figure*}[t]
	\centering
    \includegraphics[width=0.85\linewidth]{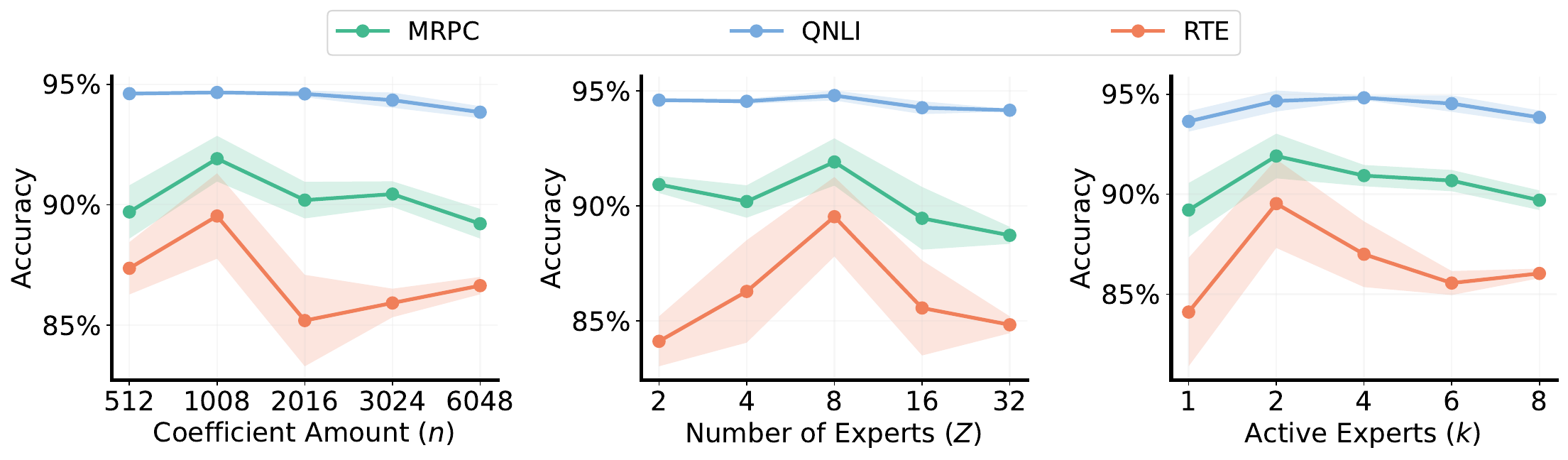}
	\caption{Expert scaling analysis on the MRPC, QNLI, and RTE datasets.
    We report the accuracy scores for each dataset. 
    \textbf{Left:} Impact of the trainable coefficient count $n$ per expert.
    \textbf{Center:} Scalability with respect to the total number of experts $Z$ (fixed activation $k=2$).
    \textbf{Right:} Effect of the active expert count $k$ given a fixed total pool size of $Z=8$.}
\label{fig:expert_scaling}
\vspace{-0.7em}
\end{figure*}

\begin{figure}[h]
    \centering
    \includegraphics[width=\linewidth]{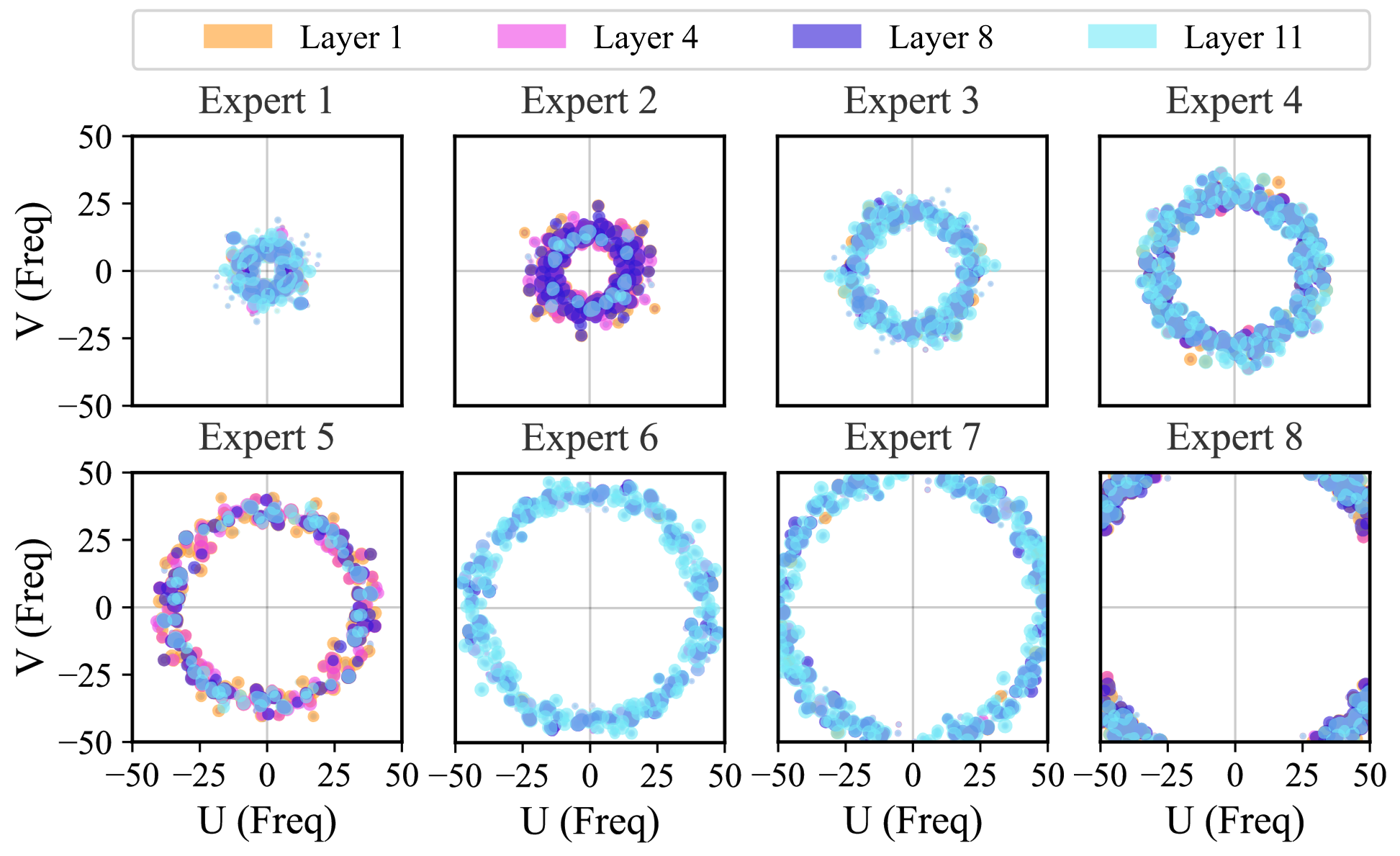}
    \caption{Visualization of expert-wise coefficient distributions across different layers of the fine-tuned model. Each expert's coefficients consistently exhibit concentrations across layers, indicating stable specialization.}
    \label{fig:expert_frequency}
    \vspace{-0.8em}
\end{figure}

\textbf{Expert Frequency.} We visualize the expert-wise coefficient distributions of FourierMoE in Figure \ref{fig:expert_frequency}. Our empirical results reveal that each expert’s learned coefficients consistently cluster within a distinct and coherent spectral region across different layers. This phenomenon suggests that FourierMoE effectively maintains spectral specialization throughout the fine-tuning process. Such specialized behavior not only enhances the model's expressive diversity but also characterizes its structural advantage in navigating the complexities of multi-task fine-tuning scenarios.

\begin{table}[h]
\vspace{-0.2em}
\centering
\caption{Performance comparison of FourierMoE under different frequency bandwidths ($\mathcal{W}$) on CoLA, MRPC, and RTE.}
\label{tab:hyper_band}
\resizebox{0.66\linewidth}{!}{
\begin{tabular}{lccc}
\toprule
\textbf{Setting} & \textbf{CoLA} & \textbf{MRPC} & \textbf{RTE} \\
\midrule
0.06 & 83.03 & 90.44 & 86.28 \\
0.12 & 85.52 & 91.91 & 89.53 \\
0.24 & 83.89 & 89.71 & 87.36 \\
0.48 & 83.31 & 88.48 & 86.28 \\
0.96 & 82.74 & 88.95 & 84.47 \\
\bottomrule
\end{tabular}}
\vspace{-0.9em}
\end{table}

\begin{table}[h!]
\centering
\caption{Performance comparison of FourierMoE using various load balancing loss weight ($\lambda$) on Cars, DTD, and SUN397.}
\label{tab:hyper_balance}
\resizebox{0.67\linewidth}{!}{
\begin{tabular}{lccc}
\toprule
\textbf{Setting} & \textbf{Cars} & \textbf{DTD} & \textbf{SUN397} \\
\midrule
0.001 & 64.44 & 77.18 & 66.41 \\
0.005 & 64.32 & 77.44 & 65.88\\
0.01 & 64.25 & 77.12 & 66.00 \\
0.05 & 64.00 & 76.91 & 65.96\\
0.1 & 64.32 & 76.80 & 65.81\\
\bottomrule
\end{tabular}}
\vspace{-0.9em}
\end{table}

\textbf{Hyperparameter Sensitivity.} \label{sec:hyperparameter_sensitivity}
We evaluate the impact of two core hyperparameters, including the frequency bandwidth $\mathcal{W}$ and load balancing loss weight $\lambda$. As shown in Table \ref{tab:hyper_band}, expanding $\mathcal{W}$ from $0.06$ to $0.12$ improves performance across benchmarks, while a further increase beyond this point causes accuracy degradation. We speculate that this is related to the role of $\mathcal{W}$ in determining the specialization of experts, consequently shaping model performance. We provide a detailed analysis of frequency bandwidth in Appendix \ref{sec:bandpass}. Furthermore, Table \ref{tab:hyper_balance} shows variations in $\lambda$ lead to marginal fluctuations in performance. This suggests that FourierMoE maintains a stable optimization trajectory, indicating that the auxiliary routing regularization does not significantly interfere with the task learning objective.

\begin{table}[h]
\centering
\caption{Efficiency comparison of methods fine-tuning RoBERTa-large on RTE using a single NVIDIA H100 (80GB) GPU. Note that training FLOPs ($\times 10^9$) are calculated per sample, training latency is reported per epoch, and inference latency is measured with a batch size of 32.}
\label{table:computation_efficiency}
\resizebox{\linewidth}{!}{
\begin{tabular}{lcccc}
\toprule
\textbf{Metric} & \textbf{LoRA} & \textbf{TopLoRA} & \textbf{GOAT} & \textbf{FourierMoE} \\
\midrule
\# Trainable Params & 2.38 M & 2.36 M & 16.98 M & 1.49 M \\
\# GPU Memory & 12.24 GB & 16.90 GB & 17.12 GB & 11.61 GB \\
\# Training FLOPs & 340.02 & 258.84 & 163.46 & 235.23 \\
Training Latency & 13.13 s & 145.69 s & 867 s & 58 s \\
Inference Latency & 53.4 ms & 397 ms & 840 ms & 313 ms \\
\midrule
Performance & 82.31 & 85.20 & 85.56 & 89.53 \\
\bottomrule
\end{tabular}
}
\vspace{-0.4em}
\end{table}

\textbf{Computation Efficiency.}
Table \ref{table:computation_efficiency} compares the efficiency of FourierMoE with single-PEFT baselines (LoRA and TopLoRA) and a MoPE baseline (GOAT). While LoRA exhibits low training and inference latency, its performance remains limited (82.31). TopLoRA and GOAT improve model performance over LoRA but introduce a significant increase in both training and inference latencies. In contrast, FourierMoE attains the best performance (89.53) while using only 1.49M trainable parameters, achieving an 11.4$\times$ trainable parameter reduction compared to GOAT. Although the training and inference latencies of FourierMoE are higher than those of LoRA (58 s vs. 13.13 s; 313 ms vs. 53.4 ms), they remain moderate in absolute terms and are still much lower than those of TopLoRA (145.69 s; 397 ms) and GOAT (867 s; 840 ms). These results highlight that our FourierMoE offers a favorable performance-efficiency trade-off.

\section{Conclusion}\label{sec:conclusion}
In this work, we first conduct a spectral analysis that reveals heterogeneity in frequency sensitivity across both model layers and downstream tasks, suggesting the necessity of fine-grained adaptation in the spectral domain. Based on these observations, we propose FourierMoE, a frequency-aware MoPE framework for adapting LLMs. Specifically, FourierMoE departs from spatial-domain adaptation by routing tokens to experts specialized in distinct frequency bands, with each expert learning conjugate-symmetric complex coefficients in the Fourier domain. This design helps reduce inter-expert redundancy and task interference under limited parameter budgets, while encouraging diverse frequency representations and theoretically ensuring real-valued weight updates. Extensive evaluations across 28 benchmarks show that FourierMoE consistently outperforms competitive PEFT and MoPE baselines with superior parameter efficiency, highlighting the potential of spectral modulation for scalable LLM adaptation.

\bibliography{ref}

@article{touvron2023llama2,
  title={Llama 2: Open foundation and fine-tuned chat models},
  author={Touvron, Hugo and Martin, Louis and Stone, Kevin and Albert, Peter and Almahairi, Amjad and Babaei, Yasmine and Bashlykov, Nikolay and Batra, Soumya and Bhargava, Prajjwal and Bhosale, Shruti and others},
  journal={arXiv preprint arXiv:2307.09288},
  year={2023}
}

@article{dettmers2023qlora,
  title={Qlora: Efficient finetuning of quantized llms},
  author={Dettmers, Tim and Pagnoni, Artidoro and Holtzman, Ari and Zettlemoyer, Luke},
  journal={arXiv preprint arXiv:2305.14314},
  year={2023}
}

@inproceedings{wang2018glue,
  title={GLUE: A Multi-Task Benchmark and Analysis Platform for Natural Language Understanding},
  author={Wang, Alex and Singh, Amanpreet and Michael, Julian and Hill, Felix and Levy, Omer and Bowman, Samuel R},
  booktitle={International Conference on Learning Representations},
  year={2018}
}

@inproceedings{zhang2022adaptive,
  title={Adaptive Budget Allocation for Parameter-Efficient Fine-Tuning},
  author={Zhang, Qingru and Chen, Minshuo and Bukharin, Alexander and He, Pengcheng and Cheng, Yu and Chen, Weizhu and Zhao, Tuo},
  booktitle={The Eleventh International Conference on Learning Representations},
  year={2022}
}

@article{lialin2023scaling,
  title={Scaling down to scale up: A guide to parameter-efficient fine-tuning},
  author={Lialin, Vladislav and Deshpande, Vijeta and Rumshisky, Anna},
  journal={arXiv preprint arXiv:2303.15647},
  year={2023}
}

@inproceedings{hu2021lora,
  title={LoRA: Low-Rank Adaptation of Large Language Models},
  author={Hu, Edward J and Wallis, Phillip and Allen-Zhu, Zeyuan and Li, Yuanzhi and Wang, Shean and Wang, Lu and Chen, Weizhu and others},
  booktitle={International Conference on Learning Representations},
  year={2021}
}

@inproceedings{aghajanyan2021intrinsic,
  title={Intrinsic Dimensionality Explains the Effectiveness of Language Model Fine-Tuning},
  author={Aghajanyan, Armen and Gupta, Sonal and Zettlemoyer, Luke},
  booktitle={Proceedings of the 59th Annual Meeting of the Association for Computational Linguistics and the 11th International Joint Conference on Natural Language Processing (Volume 1: Long Papers)},
  pages={7319--7328},
  year={2021}
}

@inproceedings{valipour2023dylora,
  title={DyLoRA: Parameter-Efficient Tuning of Pre-trained Models using Dynamic Search-Free Low-Rank Adaptation},
  author={Valipour, Mojtaba and Rezagholizadeh, Mehdi and Kobyzev, Ivan and Ghodsi, Ali},
  booktitle={Proceedings of the 17th Conference of the European Chapter of the Association for Computational Linguistics},
  pages={3266--3279},
  year={2023}
}

@article{han2024parameter,
  title={Parameter-efficient fine-tuning for large models: A comprehensive survey},
  author={Han, Zeyu and Gao, Chao and Liu, Jinyang and Zhang, Sai Qian and others},
  journal={arXiv preprint arXiv:2403.14608},
  year={2024}
}

@article{gaoparameter,
  title={Parameter-Efficient Fine-Tuning with Discrete Fourier Transform},
  author={Gao, Ziqi and Wang, Qichao and Chen, Aochuan and Liu, Zijing and Wu, Bingzhe and Chen, Liang and Li, Jia},
  journal={arXiv preprint arXiv:2405.03003},
  year={2024}
}

@article{meng2024pissa,
  title={Pissa: Principal singular values and singular vectors adaptation of large language models},
  author={Meng, Fanxu and Wang, Zhaohui and Zhang, Muhan},
  journal={arXiv preprint arXiv:2404.02948},
  year={2024}
}

@article{wang2024milora,
  title={MiLoRA: Harnessing Minor Singular Components for Parameter-Efficient LLM Finetuning},
  author={Wang, Hanqing and Xiao, Zeguan and Li, Yixia and Wang, Shuo and Chen, Guanhua and Chen, Yun},
  journal={arXiv preprint arXiv:2406.09044},
  year={2024}
}

@article{team2024gemma,
  title={Gemma: Open models based on gemini research and technology},
  author={{Gemma Team}},
  journal={arXiv preprint arXiv:2403.08295},
  year={2024}
}

@inproceedings{liu2024dora,
  title={Dora: Weight-decomposed low-rank adaptation},
  author={Liu, Shih-Yang and Wang, Chien-Yi and Yin, Hongxu and Molchanov, Pavlo and Wang, Yu-Chiang Frank and Cheng, Kwang-Ting and Chen, Min-Hung},
  booktitle={Forty-first International Conference on Machine Learning},
  year={2024}
}

@article{masoudnia2014mixture,
  title={Mixture of experts: a literature survey},
  author={Masoudnia, Saeed and Ebrahimpour, Reza},
  journal={Artificial Intelligence Review},
  volume={42},
  number={2},
  pages={275--293},
  year={2014},
  publisher={Springer}
}

@book{davis2012fourier,
  title={Fourier series and orthogonal functions},
  author={Davis, Harry F},
  year={2012},
  publisher={Courier Corporation}
}

@article{zi2023delta,
  title={Delta-lora: Fine-tuning high-rank parameters with the delta of low-rank matrices},
  author={Zi, Bojia and Qi, Xianbiao and Wang, Lingzhi and Wang, Jianan and Wong, Kam-Fai and Zhang, Lei},
  journal={arXiv preprint arXiv:2309.02411},
  year={2023}
}

@article{dubey2024llama,
  title={The llama 3 herd of models},
  author={Dubey, Abhimanyu and Jauhri, Abhinav and Pandey, Abhinav and Kadian, Abhishek and Al-Dahle, Ahmad and Letman, Aiesha and Mathur, Akhil and Schelten, Alan and Yang, Amy and Fan, Angela and others},
  journal={arXiv preprint arXiv:2407.21783},
  year={2024}
}

@article{tian2024hydralora,
  title={HydraLoRA: An Asymmetric LoRA Architecture for Efficient Fine-Tuning},
  author={Tian, Chunlin and Shi, Zhan and Guo, Zhijiang and Li, Li and Xu, Chengzhong},
  journal={arXiv preprint arXiv:2404.19245},
  year={2024}
}

@article{li2024mixlora,
  title={Mixlora: Enhancing large language models fine-tuning with lora based mixture of experts},
  author={Li, Dengchun and Ma, Yingzi and Wang, Naizheng and Cheng, Zhiyuan and Duan, Lei and Zuo, Jie and Yang, Cal and Tang, Mingjie},
  journal={arXiv preprint arXiv:2404.15159},
  year={2024}
}

@article{feng2024mixture,
  title={Mixture-of-loras: An efficient multitask tuning for large language models},
  author={Feng, Wenfeng and Hao, Chuzhan and Zhang, Yuewei and Han, Yu and Wang, Hao},
  journal={arXiv preprint arXiv:2403.03432},
  year={2024}
}

@article{cai2024survey,
  title={A survey on mixture of experts},
  author={Cai, Weilin and Jiang, Juyong and Wang, Fan and Tang, Jing and Kim, Sunghun and Huang, Jiayi},
  journal={arXiv preprint arXiv:2407.06204},
  year={2024}
}

@article{zadouri2023pushing,
  title={Pushing mixture of experts to the limit: Extremely parameter efficient moe for instruction tuning},
  author={Zadouri, Ted and {\"U}st{\"u}n, Ahmet and Ahmadian, Arash and Ermi{\c{s}}, Beyza and Locatelli, Acyr and Hooker, Sara},
  journal={arXiv preprint arXiv:2309.05444},
  year={2023}
}

@inproceedings{liu2024moe,
  title={When MOE Meets LLMs: Parameter Efficient Fine-tuning for Multi-task Medical Applications},
  author={Liu, Qidong and Wu, Xian and Zhao, Xiangyu and Zhu, Yuanshao and Xu, Derong and Tian, Feng and Zheng, Yefeng},
  booktitle={Proceedings of the 47th International ACM SIGIR Conference on Research and Development in Information Retrieval},
  pages={1104--1114},
  year={2024}
}

@inproceedings{lee2022fnet,
  title={FNet: Mixing Tokens with Fourier Transforms},
  author={Lee-Thorp, James and Ainslie, Joshua and Eckstein, Ilya and Ontanon, Santiago},
  booktitle={Proceedings of the 2022 Conference of the North American Chapter of the Association for Computational Linguistics: Human Language Technologies},
  pages={4296--4313},
  year={2022}
}

@article{wang2022adamix,
  title={AdaMix: Mixture-of-adaptations for parameter-efficient model tuning},
  author={Wang, Yaqing and Agarwal, Sahaj and Mukherjee, Subhabrata and Liu, Xiaodong and Gao, Jing and Awadallah, Ahmed Hassan and Gao, Jianfeng},
  journal={arXiv preprint arXiv:2205.12410},
  year={2022}
}

@inproceedings{zhuang2022long,
  title={Long-range sequence modeling with predictable sparse attention},
  author={Zhuang, Yimeng and Zhang, Jing and Tu, Mei},
  booktitle={Proceedings of the 60th Annual Meeting of the Association for Computational Linguistics (Volume 1: Long Papers)},
  pages={234--243},
  year={2022}
}

@article{lepikhin2020gshard,
  title={{Gshard: Scaling giant models with conditional computation and automatic sharding}},
  author={Lepikhin, Dmitry and Lee, HyoukJoong and Xu, Yuanzhong and Chen, Dehao and Firat, Orhan and Huang, Yanping and Krikun, Maxim and Shazeer, Noam and Chen, Zhifeng},
  journal={arXiv preprint arXiv:2006.16668},
  year={2020}
}

@inproceedings{park2025llamaduo,
  title={Llamaduo: Llmops pipeline for seamless migration from service llms to small-scale local llms},
  author={Park, Chansung and Jiang, Juyong and Wang, Fan and Paul, Sayak and Tang, Jing},
  booktitle={Proceedings of the 63rd Annual Meeting of the Association for Computational Linguistics (Volume 1: Long Papers)},
  pages={33194--33215},
  year={2025}
}

@article{fedus2022switch,
  title={{Switch transformers: Scaling to trillion parameter models with simple and efficient sparsity}},
  author={Fedus, William and Zoph, Barret and Shazeer, Noam},
  journal={Journal of Machine Learning Research},
  volume={23},
  number={120},
  pages={1--39},
  year={2022}
}

@book{filter,
  title={Digital image processing},
  author={Gonzales, Rafael C and Wintz, Paul},
  year={1987},
  publisher={Addison-Wesley Longman Publishing Co., Inc.}
}

@article{gao2024higher,
  title={Higher layers need more lora experts},
  author={Gao, Chongyang and Chen, Kezhen and Rao, Jinmeng and Sun, Baochen and Liu, Ruibo and Peng, Daiyi and Zhang, Yawen and Guo, Xiaoyuan and Yang, Jie and Subrahmanian, VS},
  journal={arXiv preprint arXiv:2402.08562},
  year={2024}
}

@inproceedings{clark2019boolq,
  title={BoolQ: Exploring the Surprising Difficulty of Natural Yes/No Questions},
  author={Clark, Christopher and Lee, Kenton and Chang, Ming-Wei and Kwiatkowski, Tom and Collins, Michael and Toutanova, Kristina},
  booktitle={Proceedings of the 2019 Conference of the North American Chapter of the Association for Computational Linguistics: Human Language Technologies, Volume 1 (Long and Short Papers)},
  pages={2924--2936},
  year={2019}
}

@inproceedings{hu2023llm,
  title={LLM-Adapters: An Adapter Family for Parameter-Efficient Fine-Tuning of Large Language Models},
  author={Hu, Zhiqiang and Wang, Lei and Lan, Yihuai and Xu, Wanyu and Lim, Ee-Peng and Bing, Lidong and Xu, Xing and Poria, Soujanya and Lee, Roy},
  booktitle={Proceedings of the 2023 Conference on Empirical Methods in Natural Language Processing},
  pages={5254--5276},
  year={2023}
}

@article{clark2018think,
  title={Think you have solved question answering? try arc, the ai2 reasoning challenge},
  author={Clark, Peter and Cowhey, Isaac and Etzioni, Oren and Khot, Tushar and Sabharwal, Ashish and Schoenick, Carissa and Tafjord, Oyvind},
  journal={arXiv preprint arXiv:1803.05457},
  year={2018}
}

@inproceedings{bisk2020piqa,
  title={Piqa: Reasoning about physical commonsense in natural language},
  author={Bisk, Yonatan and Zellers, Rowan and Gao, Jianfeng and Choi, Yejin and others},
  booktitle={Proceedings of the AAAI conference on artificial intelligence},
  volume={34},
  number={05},
  pages={7432--7439},
  year={2020}
}

@inproceedings{mihaylov2018can,
  title={Can a Suit of Armor Conduct Electricity? A New Dataset for Open Book Question Answering},
  author={Mihaylov, Todor and Clark, Peter and Khot, Tushar and Sabharwal, Ashish},
  booktitle={Proceedings of the 2018 Conference on Empirical Methods in Natural Language Processing},
  pages={2381--2391},
  year={2018}
}

@article{sakaguchi2021winogrande,
  title={Winogrande: An adversarial winograd schema challenge at scale},
  author={Sakaguchi, Keisuke and Bras, Ronan Le and Bhagavatula, Chandra and Choi, Yejin},
  journal={Communications of the ACM},
  volume={64},
  number={9},
  pages={99--106},
  year={2021},
  publisher={ACM New York, NY, USA}
}

@inproceedings{zellers2019hellaswag,
  title={HellaSwag: Can a Machine Really Finish Your Sentence?},
  author={Zellers, Rowan and Holtzman, Ari and Bisk, Yonatan and Farhadi, Ali and Choi, Yejin},
  booktitle={Proceedings of the 57th Annual Meeting of the Association for Computational Linguistics},
  pages={4791--4800},
  year={2019}
}

@inproceedings{sap2019social,
  title={Social IQa: Commonsense Reasoning about Social Interactions},
  author={Sap, Maarten and Rashkin, Hannah and Chen, Derek and Le Bras, Ronan and Choi, Yejin},
  booktitle={Proceedings of the 2019 Conference on Empirical Methods in Natural Language Processing and the 9th International Joint Conference on Natural Language Processing (EMNLP-IJCNLP)},
  pages={4463--4473},
  year={2019}
}

@article{cobbe2021training,
  title={Training verifiers to solve math word problems},
  author={Cobbe, Karl and Kosaraju, Vineet and Bavarian, Mohammad and Chen, Mark and Jun, Heewoo and Kaiser, Lukasz and Plappert, Matthias and Tworek, Jerry and Hilton, Jacob and Nakano, Reiichiro and others},
  journal={arXiv preprint arXiv:2110.14168},
  year={2021}
}

@article{patel2021nlp,
  title={Are NLP models really able to solve simple math word problems?},
  author={Patel, Arkil and Bhattamishra, Satwik and Goyal, Navin},
  journal={arXiv preprint arXiv:2103.07191},
  year={2021}
}

@article{roy2016solving,
  title={Solving general arithmetic word problems},
  author={Roy, Subhro and Roth, Dan},
  journal={arXiv preprint arXiv:1608.01413},
  year={2016}
}

@inproceedings{hosseini2014learning,
  title={Learning to solve arithmetic word problems with verb categorization},
  author={Hosseini, Mohammad Javad and Hajishirzi, Hannaneh and Etzioni, Oren and Kushman, Nate},
  booktitle={Proceedings of the 2014 Conference on Empirical Methods in Natural Language Processing (EMNLP)},
  pages={523--533},
  year={2014}
}

@article{ling2017program,
  title={Program induction by rationale generation: Learning to solve and explain algebraic word problems},
  author={Ling, Wang and Yogatama, Dani and Dyer, Chris and Blunsom, Phil},
  journal={arXiv preprint arXiv:1705.04146},
  year={2017}
}

@article{koncel2015parsing,
  title={Parsing algebraic word problems into equations},
  author={Koncel-Kedziorski, Rik and Hajishirzi, Hannaneh and Sabharwal, Ashish and Etzioni, Oren and Ang, Siena Dumas},
  journal={Transactions of the Association for Computational Linguistics},
  volume={3},
  pages={585--597},
  year={2015},
  publisher={MIT Press One Rogers Street, Cambridge, MA 02142-1209, USA journals-info~…}
}

@article{wang2024kasa,
  title={KaSA: Knowledge-Aware Singular-Value Adaptation of Large Language Models},
  author={Wang, Fan and Jiang, Juyong and Park, Chansung and Kim, Sunghun and Tang, Jing},
  journal={arXiv preprint arXiv:2412.06071},
  year={2024}
}

@article{dou2023loramoe,
  title={Loramoe: Revolutionizing mixture of experts for maintaining world knowledge in language model alignment},
  author={Dou, Shihan and Zhou, Enyu and Liu, Yan and Gao, Songyang and Zhao, Jun and Shen, Wei and Zhou, Yuhao and Xi, Zhiheng and Wang, Xiao and Fan, Xiaoran and others},
  journal={arXiv preprint arXiv:2312.09979},
  volume={4},
  number={7},
  year={2023}
}

@article{fan2025make,
  title={Make LoRA Great Again: Boosting LoRA with Adaptive Singular Values and Mixture-of-Experts Optimization Alignment},
  author={Fan, Chenghao and Lu, Zhenyi and Liu, Sichen and Gu, Chengfeng and Qu, Xiaoye and Wei, Wei and Cheng, Yu},
  journal={arXiv preprint arXiv:2502.16894},
  year={2025}
}

@article{li2025beyond,
  title={Beyond higher rank: Token-wise input-output projections for efficient low-rank adaptation},
  author={Li, Shiwei and Luo, Xiandi and Wang, Haozhao and Tang, Xing and Cui, Ziqiang and Liu, Dugang and Li, Yuhua and He, Xiuqiang and Li, Ruixuan},
  journal={arXiv preprint arXiv:2510.23123},
  year={2025}
}

@article{zhang2025beyond,
  title={Beyond the time domain: Recent advances on frequency transforms in time series analysis},
  author={Zhang, Qianru and Yang, Peng and Wen, Honggang and Li, Xinzhu and Wang, Haixin and Sun, Fang and Song, Zezheng and Lai, Zhichen and Ma, Rui and Han, Ruihua and others},
  journal={arXiv preprint arXiv:2504.07099},
  year={2025}
}

@article{ma2025fouriercompress,
  title={FourierCompress: Layer-Aware Spectral Activation Compression for Efficient and Accurate Collaborative LLM Inference},
  author={Ma, Jian and Lyu, Xinchen and Jiang, Jun and Zou, Longhao and Ren, Chenshan and Cui, Qimei and Tao, Xiaofeng},
  journal={arXiv preprint arXiv:2510.16418},
  year={2025}
}

@article{he2023fourier,
  title={Fourier transformer: Fast long range modeling by removing sequence redundancy with fft operator},
  author={He, Ziwei and Yang, Meng and Feng, Minwei and Yin, Jingcheng and Wang, Xinbing and Leng, Jingwen and Lin, Zhouhan},
  journal={arXiv preprint arXiv:2305.15099},
  year={2023}
}

@inproceedings{fan2025towards,
  title={Towards More Efficient Post-training via Fourier Domain Adapter Framework},
  author={Fan, Yijia and Zhang, Jusheng and Wang, Keze},
  booktitle={Findings of the Association for Computational Linguistics: EMNLP 2025},
  pages={6175--6193},
  year={2025}
}

@inproceedings{radford2021learning,
  title={Learning transferable visual models from natural language supervision},
  author={Radford, Alec and Kim, Jong Wook and Hallacy, Chris and Ramesh, Aditya and Goh, Gabriel and Agarwal, Sandhini and Sastry, Girish and Askell, Amanda and Mishkin, Pamela and Clark, Jack and others},
  booktitle={International conference on machine learning},
  pages={8748--8763},
  year={2021},
  organization={PmLR}
}

@inproceedings{krause20133d,
  title={3d object representations for fine-grained categorization},
  author={Krause, Jonathan and Stark, Michael and Deng, Jia and Fei-Fei, Li},
  booktitle={Proceedings of the IEEE international conference on computer vision workshops},
  pages={554--561},
  year={2013}
}

@inproceedings{cimpoi2014describing,
  title={Describing textures in the wild},
  author={Cimpoi, Mircea and Maji, Subhransu and Kokkinos, Iasonas and Mohamed, Sammy and Vedaldi, Andrea},
  booktitle={Proceedings of the IEEE conference on computer vision and pattern recognition},
  pages={3606--3613},
  year={2014}
}

@article{helber2019eurosat,
  title={Eurosat: A novel dataset and deep learning benchmark for land use and land cover classification},
  author={Helber, Patrick and Bischke, Benjamin and Dengel, Andreas and Borth, Damian},
  journal={IEEE Journal of Selected Topics in Applied Earth Observations and Remote Sensing},
  volume={12},
  number={7},
  pages={2217--2226},
  year={2019},
  publisher={IEEE}
}

@inproceedings{houben2013detection,
  title={Detection of traffic signs in real-world images: The German Traffic Sign Detection Benchmark},
  author={Houben, Sebastian and Stallkamp, Johannes and Salmen, Jan and Schlipsing, Marc and Igel, Christian},
  booktitle={The 2013 international joint conference on neural networks (IJCNN)},
  pages={1--8},
  year={2013},
  organization={Ieee}
}

@article{cheng2017remote,
  title={Remote sensing image scene classification: Benchmark and state of the art},
  author={Cheng, Gong and Han, Junwei and Lu, Xiaoqiang},
  journal={Proceedings of the IEEE},
  volume={105},
  number={10},
  pages={1865--1883},
  year={2017},
  publisher={IEEE}
}

@inproceedings{xiao2010sun,
  title={Sun database: Large-scale scene recognition from abbey to zoo},
  author={Xiao, Jianxiong and Hays, James and Ehinger, Krista A and Oliva, Aude and Torralba, Antonio},
  booktitle={2010 IEEE computer society conference on computer vision and pattern recognition},
  pages={3485--3492},
  year={2010},
  organization={IEEE}
}

@inproceedings{netzer2011reading,
  title={Reading digits in natural images with unsupervised feature learning},
  author={Netzer, Yuval and Wang, Tao and Coates, Adam and Bissacco, Alessandro and Wu, Baolin and Ng, Andrew Y and others},
  booktitle={NIPS workshop on deep learning and unsupervised feature learning},
  volume={2011},
  number={5},
  pages={7},
  year={2011},
  organization={Granada}
}

@article{yang2025qwen3,
  title={Qwen3 technical report},
  author={Yang, An and Li, Anfeng and Yang, Baosong and Zhang, Beichen and Hui, Binyuan and Zheng, Bo and Yu, Bowen and Gao, Chang and Huang, Chengen and Lv, Chenxu and others},
  journal={arXiv preprint arXiv:2505.09388},
  year={2025}
}

@article{grattafiori2024llama,
  title={The llama 3 herd of models},
  author={Grattafiori, Aaron and Dubey, Abhimanyu and Jauhri, Abhinav and Pandey, Abhinav and Kadian, Abhishek and Al-Dahle, Ahmad and Letman, Aiesha and Mathur, Akhil and Schelten, Alan and Vaughan, Alex and others},
  journal={arXiv preprint arXiv:2407.21783},
  year={2024}
}

@article{liu2024adamole,
  title={Adamole: Fine-tuning large language models with adaptive mixture of low-rank adaptation experts},
  author={Liu, Zefang and Luo, Jiahua},
  journal={arXiv preprint arXiv:2405.00361},
  year={2024}
}

@inproceedings{si2025unleashing,
title={Unleashing the Power of Task-Specific Directions in Parameter Efficient Fine-tuning},
author={Chongjie Si and Zhiyi Shi and Shifan Zhang and Xiaokang Yang and Hanspeter Pfister and Wei Shen},
booktitle={The Thirteenth International Conference on Learning Representations},
year={2025},
url={https://openreview.net/forum?id=RYrJqz44p4}
}

@article{zhong2024neat,
  title={Neat: Nonlinear parameter-efficient adaptation of pre-trained models},
  author={Zhong, Yibo and Jiang, Haoxiang and Li, Lincan and Nakada, Ryumei and Liu, Tianci and Zhang, Linjun and Yao, Huaxiu and Wang, Haoyu},
  journal={arXiv preprint arXiv:2410.01870},
  year={2024}
}

@article{yu2020gradient,
  title={Gradient surgery for multi-task learning},
  author={Yu, Tianhe and Kumar, Saurabh and Gupta, Abhishek and Levine, Sergey and Hausman, Karol and Finn, Chelsea},
  journal={Advances in neural information processing systems},
  volume={33},
  pages={5824--5836},
  year={2020}
}

@inproceedings{kim2025lfma,
title={{LFMA}: Parameter-Efficient Fine-Tuning via Layerwise Fourier Masked Adapter with Top-k Frequency Selection},
author={Soo Yong Kim},
booktitle={NeurIPS 2025 Workshop on Symmetry and Geometry in Neural Representations},
year={2025},
url={https://openreview.net/forum?id=dBRaAOncLD}
}

@article{sun2025stronger,
  title={A Stronger Mixture of Low-Rank Experts for Fine-Tuning Foundation Models},
  author={Sun, Mengyang and Wang, Yihao and Feng, Tao and Zhang, Dan and Zhu, Yifan and Tang, Jie},
  journal={arXiv preprint arXiv:2502.15828},
  year={2025}
}

@article{zhang2025more,
  title={MoRE: A Mixture of Low-Rank Experts for Adaptive Multi-Task Learning},
  author={Zhang, Dacao and Zhang, Kun and Chu, Shimao and Wu, Le and Li, Xin and Wei, Si},
  journal={arXiv preprint arXiv:2505.22694},
  year={2025}
}

@article{guibas2021adaptive,
  title={Adaptive fourier neural operators: Efficient token mixers for transformers},
  author={Guibas, John and Mardani, Morteza and Li, Zongyi and Tao, Andrew and Anandkumar, Anima and Catanzaro, Bryan},
  journal={arXiv preprint arXiv:2111.13587},
  year={2021}
}

@book{oppenheim1999discrete,
  title={Discrete-time signal processing},
  author={Oppenheim, Alan V},
  year={1999},
  publisher={Pearson Education India}
}

@article{liu2019roberta,
  title={Roberta: A robustly optimized bert pretraining approach},
  author={Liu, Yinhan and Ott, Myle and Goyal, Naman and Du, Jingfei and Joshi, Mandar and Chen, Danqi and Levy, Omer and Lewis, Mike and Zettlemoyer, Luke and Stoyanov, Veselin},
  journal={arXiv preprint arXiv:1907.11692},
  year={2019}
}

@article{kalajdzievski2023rank,
  title={A rank stabilization scaling factor for fine-tuning with lora},
  author={Kalajdzievski, Damjan},
  journal={arXiv preprint arXiv:2312.03732},
  year={2023}
}

@inproceedings{ren2024melora,
  title={Melora: Mini-ensemble low-rank adapters for parameter-efficient fine-tuning},
  author={Ren, Pengjie and Shi, Chengshun and Wu, Shiguang and Zhang, Mengqi and Ren, Zhaochun and de Rijke, Maarten and Chen, Zhumin and Pei, Jiahuan},
  booktitle={Proceedings of the 62nd Annual Meeting of the Association for Computational Linguistics (Volume 1: Long Papers)},
  pages={3052--3064},
  year={2024}
}
\bibliographystyle{icml2026}

\newpage
\appendix
\onecolumn
\section{Algorithm for FourierMoE} \label{sec:algorithm}

\begin{algorithm}[h]
\caption{FourierMoE: Fourier Mixture-of-Experts Adaptation}
\label{alg:fouriermoe}
\begin{algorithmic}[1]
\REQUIRE Pretrained weights $\mathbf{W}_0 \in \mathbb{R}^{M \times N}$, Training dataset $\mathcal{D}_{train}$, scaling factor $\eta$.
\REQUIRE Number of experts $Z$, active experts $k$, frequency count $n$, bandwidth $\mathcal{W}$.
\STATE \textbf{Initialize Router:} $\Phi \leftarrow$ random initialization.
\STATE \textbf{Initialize Experts:} For each expert $i \in \{1, \dots, Z\}$:
\STATE \quad Sample active frequency indices $\Omega_i = \{(u,v)\}_{1}^n$ via Gaussian distribution $P_i(u,v)$ (Eq. \ref{eq:gaussian}).
\STATE \quad Initialize complex coefficients $\Theta_i = \{ \mathbf{C}_i(u,v) \in \mathbb{C} \mid (u,v) \in \Omega_i \}$.
\WHILE{not converged}
    \STATE Sample batch of inputs $x \sim \mathcal{D}_{train}$.
    \STATE \textbf{1. Gating \& Routing:}
    \STATE \quad Compute gating scores: $g = \text{Softmax}(\mathcal{G}_{\Phi}(x))$.
    \STATE \quad Select Top-$k$ experts: $\mathcal{S}(x) = \text{Top-}k(g)$.
    \STATE \textbf{2. Spectral-to-Spatial Expert Construction:}
    \STATE \quad Initialize composite update $\Delta \mathbf{W} \leftarrow \mathbf{0}^{M \times N}$.
    \FOR{each selected expert $i \in \mathcal{S}(x)$}
        \STATE \quad Construct sparse spectral matrix $\mathbf{E}_i \in \mathbb{C}^{M \times N}$ from $\Theta_i$.
        \STATE \quad \textit{// Enforce Conjugate Symmetry (Theorem \ref{thm:symmetry}) for lossless real-valued adaptation}
        \STATE \quad $\mathbf{E}_i(-u, -v) \leftarrow \mathbf{E}_i^*(u, v), \quad \forall (u,v) \in \Omega_i$.
        \STATE \quad Compute spatial expert update via IDFT:
        \STATE \quad $\Delta \mathbf{W}_i \leftarrow \mathcal{F}^{-1}(\mathbf{E}_i)$ \quad (Eq. \ref{eq:reparam}).
        \STATE \quad Accumulate: $\Delta \mathbf{W} \leftarrow \Delta \mathbf{W} + g_i \cdot \Delta \mathbf{W}_i$.
    \ENDFOR
    \STATE \textbf{3. Forward Pass:}
    \STATE \quad Compute output: $h = \mathbf{W}_0 x + \eta \Delta \mathbf{W} x$ \quad (Eq. \ref{eq:optimization}).
    \STATE \textbf{4. Optimization:}
    \STATE \quad Calculate Loss: $\mathcal{L} = \mathcal{L}_{task}(h) + \lambda \mathcal{L}_{aux}(g)$ \quad (Eq. \ref{eq:loss} and Eq. \ref{eq:aux_loss}).
    \STATE \quad Update parameters $\{\Theta, \Phi\}$ via gradient descent.
\ENDWHILE
\ENSURE Adapted parameters $\{\Theta, \Phi\}$.
\end{algorithmic}
\end{algorithm}

\section{More Related Works}\label{sec:related_work}
\subsection{Parameter-efficient Fine-tuning}
The escalating scale of LLMs renders FFT, a commonly used adaptation paradigm, computationally prohibitive \citep{zi2023delta,han2024parameter}. 
PEFT addresses this bottleneck by freezing the pretrained backbone and updating only a minimal set of added or existing parameters \citep{lialin2023scaling}. For instance, LoRA \citep{hu2021lora} employs a pair of learnable low-rank matrices to approximate the learned weight updates. While effective for single-task adaptation, such single-PEFT methods that rely on a single tunable module often suffer from capacity constraints and task interference in multi-task scenarios due to limited parameter sharing across diverse or conflicting objectives \citep{li2024mixlora,tian2024hydralora}. To alleviate these limitations, recent works incorporate the MoE architecture into PEFT, giving rise to MoPE methods \citep{fan2025make,sun2025stronger,wang2022adamix,feng2024mixture,zadouri2023pushing}.  These approaches employ a router to dynamically select lightweight expert modules conditioned on input tokens \citep{cai2024survey}, enabling a more flexible allocation of task-specific capacity and potentially reducing task interference. However, existing MoPE methods primarily operate in the spatial domain where experts are constructed as parallel instantiations of lightweight PEFT modules. Such designs lack explicit mechanisms to encourage orthogonality or diversity among experts,  which can lead to structural redundancy \citep{gao2024higher,tian2024hydralora}. Moreover, maintaining multiple spatial experts incurs additional parameter overhead, which partially compromises the fundamental efficiency goals of PEFT. Unlike prior MoPE methods, FourierMoE utilizes a router to dispatch tokens to experts specialized in distinct frequency bands, with each expert learning conjugate-symmetric complex coefficients in the Fourier domain.

\subsection{Fourier Transform}
Fourier transform has provided a powerful tool for analyzing global correlations and structural patterns in the frequency domain \citep{zhang2025beyond,fan2025towards}. It has been widely explored and used in various domains, such as optimizing long-sequence modeling \citep{zhuang2022long,guibas2021adaptive,he2023fourier}, compressing activation \cite{ma2025fouriercompress}, and improving the efficiency of the Transformer architecture \citep{lee2022fnet}. Fourier transform and spectral adaptation have recently attracted increasing attention in the PEFT regime. FourierFT \citep{gaoparameter} pioneers this line of research by parameterizing weight updates in the spectral domain. It learns a set of coefficients within a sparse spectral matrix and reconstructs it via IDFT to obtain the weight updates. While achieving performance competitive with LoRA, FourierFT treats spectral components uniformly and does not explicitly account for the varying importance of different frequency bands, which may lead to suboptimal parameter allocation \citep{kim2025lfma}. To refine this, LFMA \citep{kim2025lfma} optimizes only the top-$k$ coefficient coordinates with the largest magnitudes. However, these methods rely on static coefficient locations that cannot adapt to the dynamic evolution of gradients during fine-tuning or the distinct frequency signatures inherent in input tokens. Furthermore, they truncate the imaginary components of the complex spectrum, restricting the representation power of the learned updates. In contrast to prior works, FourierMoE integrates a frequency-adaptive router with specialized experts, enabling dynamic, frequency-aware adaptation for models. Furthermore, by enforcing complex coefficient learning and conjugate symmetry, FourierMoE bridges the gap between spectral-domain efficiency and spatial-domain expressivity, achieving superior performance while maintaining high efficiency.

\section{Baselines} \label{sec:baselines}
To evaluate our method's effectiveness, we compare FourierMoE against FFT, single PEFT, and MoPE baselines, including: \\
\textbf{FFT} initializes the foundation model with its pretrained weights and updates all parameters. \\
\textbf{FFT MoE} fully fine-tunes the upcycled MoE, which transforms the pretrained dense model into a MoE architecture by duplicating pretrained weights to create multiple experts and integrating a learnable router. \\
\textbf{LoRA} \citep{hu2021lora} reparameterizes the weight updates into a pair of tunable low-rank matrices, which can be merged with the foundation model after fine-tuning. \\
\textbf{rsLoRA} \citep{kalajdzievski2023rank} analyzes the scaling factor of LoRA and proposes to stabilize LoRA's learning trajectory with a factor of $\frac{\alpha}{\sqrt{r}}$ instead of $\frac{\alpha}{r}$. \\
\textbf{DoRA} \citep{liu2024dora} factorizes pretrained weights into magnitude and direction, updating only the magnitude via low-rank adaptation while keeping the direction fixed. \\
\textbf{PiSSA} \citep{meng2024pissa} performs singular value decomposition (SVD) on pretrained weights to obtain a principled low-rank initialization for fine-tuning. \\
\textbf{MiLoRA} \citep{wang2024milora} also leverages SVD on pretrained weights, but freezes the principal components and fine-tunes only the minor low-rank components. \\
\textbf{KaSA} \citep{wang2024kasa} applies SVD to the pretrained weights, truncates the minor components, and subsequently learns the weight updates in the SVD space. \\
\textbf{LoRA-Dash} \citep{si2025unleashing} first runs a short LoRA warm-up to identify task-specific directions by projecting the learned updates onto the singular directions of pretrained weights, and then jointly learns the corresponding coordinate scalars together with the low-rank adapters during fine-tuning. \\
\textbf{NEAT} \citep{zhong2024neat} introduces a lightweight network that takes the pretrained model weights as input and learns a nonlinear transformation to approximate the weight updates during fine-tuning. \\
\textbf{TopLoRA} \citep{li2025beyond} dynamically adjusts the LoRA weights by using a token-wise diagonal matrix generated by an additional, tunable projector network. \\
\textbf{MELoRA} \citep{ren2024melora} concatenates multiple mini LoRA modules in parallel along the diagonal to construct a block diagonal LoRA matrix, achieving higher effective rank without additional parameter overhead. \\
\textbf{FourierFT} \citep{gaoparameter} learns the Fourier coefficients in the spectral domain, then converts them back to the time domain via the IDFT to produce model weight updates. \\
\textbf{MoLoRA} \citep{zadouri2023pushing} integrates the MoE framework with PEFT by replacing the experts (usually initialized as the feed-forward network copies) with lightweight LoRA modules, which are selectively activated for each token via a routing mechanism. \\
\textbf{AdaMoLE} \citep{liu2024adamole} replaces the static top-k selection in the LoRA-MoE framework with dynamic thresholding mechanism to adaptively activate the most relevant experts based on input context. \\
\textbf{HydraLoRA} \citep{tian2024hydralora} introduces an asymmetric LoRA-MoE framework, characterized by a shared $\mathbf{A}$ matrix to capture general knowledge and multiple task-specific matrices $\mathbf{B}$ that are dynamically combined through a routing mechanism. \\
\textbf{GOAT} \citep{fan2025make} proposes a SVD-structured MoE framework that adaptively activates the experts initialized with distinct singular-value segments derived from the pretrained weights. In addition, it employs a theoretical scaling scheme to align the low-rank adaptation trajectories with those of fully fine-tuned MoE.

\section{Details of Datasets and Benchmarks}\label{sec:statistics_dataset}
\subsection{Commonsense Reasoning Datasets}
For the commonsense reasoning tasks, we fine-tune LLMs on the Commonsense170K dataset \citep{hu2023llm}. It contains data samples from eight datasets: ARC-e, ARC-c \citep{clark2018think}, OBQA \citep{mihaylov2018can}, SIQA \citep{sap2019social}, WinoGrande \citep{sakaguchi2021winogrande}, HellaSwag \citep{zellers2019hellaswag}, BoolQ \citep{clark2019boolq}, and PIQA \citep{bisk2020piqa}. We evaluate the fine-tuned models on the test splits of each dataset and report accuracy. The statistics of the datasets are presented in Table \ref{tab:dataset_commonsense170k}.
\begin{table}[h]
\centering
\caption{Statistics of the sub-datasets comprising the Commonsense170K dataset used for multi-task fine-tuning.}
\resizebox{0.73\linewidth}{!}{
\begin{tabular}{l|lcccc}
\toprule
\textbf{Dataset} & \textbf{Domain} & \textbf{\# Train} & \textbf{\# Test} & \textbf{Task Type} & \textbf{Answer}\\
\midrule
ARC-E & Natural Science & 2.3K & 2.3k & Question Answering & Option \\
ARC-C & Natural Science & 1.1K & 1.1k & Question Answering & Option \\
BoolQ & Wikipedia & 9.4K & 3.2k & Text Classification & Yes/No\\
OpenBookQA & Science Facts & 5.0K & 500 & Question Answering & Option \\
PIQA & Physical Interaction & 16.1K & 1.8k & Question Answering & Option\\
SIQA & Social Interaction & 33.4K & 1.9k & Question Answering & Option\\
HellaSwag & Video Caption & 39.9K & 10k & Sentence Completion & Option \\
WinoGrande & Winograd Schemas & 63.2K & 1.2k & Fill in the Blank & Option \\
\bottomrule
\end{tabular}}
\label{tab:dataset_commonsense170k}
\vspace{-1em}
\end{table}

\subsection{Math Reasoning Datasets}\label{sec:math_reasoning_dataset}
We adopt the Math10K dataset \citep{hu2023llm} to adapt LLMs for math reasoning tasks. This dataset consists of training data from AQuA \citep{ling2017program} and GSM8K \citep{cobbe2021training}. In addition, we evaluate the fine-tuned models on six math benchmarks: GSM8K \citep{cobbe2021training}, SVAMP \citep{patel2021nlp}, MultiArith \citep{roy2016solving}, AddSub \citep{hosseini2014learning}, AQuA \citep{ling2017program}, and SingleEq \citep{koncel2015parsing}. We report accuracy for all benchmarks.
The statistics of the datasets and benchmarks are shown in Table \ref{tab:dataset_math10k}.
\begin{table}[h]
\centering
\caption{Statistics of sub-datasets comprising the Math10K dataset used for multi-task fine-tuning. In the Math10K dataset, only GSM8K and AQuA provide training samples. The training sample counts of other datasets are marked with the ``-" symbol, indicating that we do not use their training data for fine-tuning.}
\resizebox{0.95\linewidth}{!}{
\begin{tabular}{l|lcccc}
\toprule
\textbf{Dataset} & \textbf{Domain} & \textbf{\# Train} & \textbf{\# Test} & \textbf{Task Type} & \textbf{Answer}\\
\midrule
AddSub & Arithmetic Word Problems & - & 395 & Math Word Problem Solving & Number \\
AQuA & GRE/GMAT Math & 100K & 254 & Multiple-Choice Question Answering & Option \\
GSM8K & Grade School Math Word Problems & 8.8K & 1.3k & Math Word Problem Solving & Number \\
MultiArith & Multi-step Math Word Problems & - & 600 & Math Word Problem Solving & Number  \\
SingleEQ & Grade School Algebra Word Problems & - & 508 & Math Word Problem Solving & Number \\
SVAMP & Arithmetic Word Problems & - & 1k & Math Word Problem Solving & Number \\
\bottomrule
\end{tabular}}
\label{tab:dataset_math10k}
\vspace{-0.4em}
\end{table}

\subsection{GLUE Dataset}\label{sec:glue_benchmark}
For NLU tasks, we fine-tune RoBERTa-large on seven datasets from GLUE \citep{wang2018glue}, including CoLA, SST-2, MRPC, QQP, MNLI, QNLI, and RTE. We evaluate the fine-tuned model on the test splits of each dataset and report accuracy. The statistics of each dataset is shown in Table \ref{tab:dataset_glue}.
\begin{table*}[ht!]
\centering
\caption{Statistics of the seven GLUE datasets for single-task fine-tuning.}
\resizebox{0.73\linewidth}{!}{
\begin{tabular}{l|lcccc}
\toprule
\textbf{Dataset} & \textbf{Domain} & \textbf{\# Train} & \textbf{\# Test} & \textbf{Task Type} & \textbf{Answer} \\
\midrule
CoLA & Miscellaneous & 8.5K & 1.0k & Acceptability & Label Text \\
SST-2 & Movie Reviews & 67.3K & 1.8k & Sentiment Analysis & Label Text \\
MRPC & News & 3.6K & 1.7k & Paraphrase & Label Text \\ 
QQP & Social QA & 364K & 391K & Paraphrase & Label Text \\
MNLI & Miscellaneous & 393K & 19.6K & Natural Language Inference & Label Text \\
QNLI & Wikipedia & 105K & 5.4k & Natural Language Inference & Label Text \\
RTE & News \& Wikipedia & 2.4K & 3k & Natural Language Inference & Label Text \\
\bottomrule
\end{tabular}}
\label{tab:dataset_glue}
\end{table*}

\subsection{Image Classification Datasets}\label{sec:image_classification_benchmark}
For image classification tasks, we fine-tune CLIP ViT-B/32 on seven datasets, including Cars \citep{krause20133d}, DTD \citep{cimpoi2014describing}, EuroSAT \citep{helber2019eurosat}, GTSRB \citep{houben2013detection}, RESISC45 \citep{cheng2017remote}, SUN397 \citep{xiao2010sun}, and SVHN \citep{netzer2011reading}. We evaluate the fine-tuned model on the test splits of each dataset and report accuracy. The statistics of datasets and benchmarks are shown in Table \ref{tab:dataset_image_classification}
\begin{table*}[ht!]
\centering
\caption{Statistics of the seven image classification datasets for single-task fine-tuning.}
\resizebox{0.73\linewidth}{!}{
\begin{tabular}{l|lcccc}
\toprule
\textbf{Dataset} & \textbf{Domain} & \textbf{\# Train} & \textbf{\# Test} & \textbf{Task Type} & \textbf{Answer} \\
\midrule
StanfordCars & Automobiles & 8.1K & 8.0K & Image Classification & Class Label \\
DTD & Textures & 3.7K & 1.8K & Image Classification & Class Label \\
EuroSAT & Satellite & 21.6K & 2.7K & Image Classification & Class Label \\
GTSRB & Traffic Signs & 26.6K & 12.6K & Image Classification & Class Label \\
RESISC45 & Aerial Imagery & 18.9K & 6.3K & Image Classification & Class Label \\
SUN397 & Scenes & 76.1K & 21.7K & Image Classification & Class Label \\
SVHN & Street Digits & 73.2K & 26.0K & Image Classification & Class Label \\
\bottomrule
\end{tabular}}
\label{tab:dataset_image_classification}
\vspace{-1em}
\end{table*}

\section{Training Details}\label{sec:train_details}
We follow the prior works \citep{li2024mixlora} to adopt a multi-task training setup for commonsense and math reasoning tasks, where models are trained on a mixed dataset and evaluated on different benchmarks. In addition, we employ a single-task training setup for image classification and NLU tasks, in which models are trained and evaluated separately on each task. To ensure an optimal performance, we meticulously tune the hyperparameters, including the learning rate, batch size, number of epochs, scaling value $\eta$, and the load-balancing loss weight $\lambda$, and other hyperparameters. The full hyperparameter configurations for all experimental settings are provided: commonsense reasoning in Table \ref{tab:commensense_config}, math reasoning in Table \ref{tab:math_reasoning_config}, image classification in Table \ref{tab:image_classification_config}, and NLU in Table \ref{tab:nlu_config}.
\begin{table}[h]
\centering
\caption{Detailed configurations used for fine-tuning LLaMA-2 7B and Gemma 7B models on commonsense reasoning tasks.}
\resizebox{0.45\textwidth}{!}{%
\begin{tabular}{l|cc}
\toprule
\multirow{2}*{\textbf{Hyperparameters}} & \multicolumn{2}{c}{\textbf{Commonsense Reasoning}}  \\
\cmidrule(lr){2-3}
& \textbf{LLaMA-2 7B} & \textbf{Gemma 7B} \\
\midrule
             Optimizer & \multicolumn{2}{c}{AdamW} \\
             LR & 8e-2   & 1.5e-1 \\
             LR Scheduler & \multicolumn{2}{c}{Linear} \\
             Max Seq. Len. & \multicolumn{2}{c}{256} \\
             Batch Size & \multicolumn{2}{c}{16} \\
             Accumulation Steps & \multicolumn{2}{c}{16} \\
             Dropout & \multicolumn{2}{c}{0.05}\\
             Warmup Ratio  & 0.03   & 0.05 \\
             \# Epochs   & \multicolumn{2}{c}{1} \\
             Spectral Coefficients $n$   & \multicolumn{2}{c}{8192} \\
             Placement & \multicolumn{2}{c}{Q,K,V,Up,Down,Gate} \\
             Scaling Value $\eta$  & 96  & 128 \\
             Load-balancing Scaling $\lambda$ & 0.002 & 0.001 \\ 
             \# Experts & \multicolumn{2}{c}{8} \\
             Top-k & \multicolumn{2}{c}{4} \\
\bottomrule
\end{tabular}
}
\label{tab:commensense_config}
\vspace{-1em}
\end{table}
\begin{table}[h]
\centering
\caption{Detailed configurations used for fine-tuning LLaMA-3 8B and Qwen2.5-14B models on math reasoning tasks.}
\resizebox{0.45\textwidth}{!}{
\begin{tabular}{l|cc}
\toprule
\multirow{2}*{\textbf{Hyperparameters}} & \multicolumn{2}{c}{\textbf{Math Reasoning}}  \\
\cmidrule(lr){2-3}
& \textbf{LLaMA-3 8B} & \textbf{Qwen2.5-14B} \\
\midrule
             Optimizer & \multicolumn{2}{c}{AdamW} \\
             LR & 2e-4 & 1e-4  \\
             LR Scheduler & \multicolumn{2}{c}{Linear} \\
             Max Seq. Len. & \multicolumn{2}{c}{1024}    \\
             Batch Size & \multicolumn{2}{c}{8} \\
             Accumulation Steps & \multicolumn{2}{c}{16} \\
             Dropout & \multicolumn{2}{c}{0.05} \\ 
             Warmup Ratio  & \multicolumn{2}{c}{0.05}  \\ 
             \# Epochs  & \multicolumn{2}{c}{1}  \\
             Spectral Coefficients $n$  & \multicolumn{2}{c}{8192} \\
             Placement & \multicolumn{2}{c}{Q, K, V, O, Up, Down, Gate} \\ 
             Scaling Value $\eta$   & 128 & 2 \\
             Load-balancing Scaling $\lambda$ & 0.001 & 0.002 \\
             \# Experts & \multicolumn{2}{c}{8} \\
             Top-k & \multicolumn{2}{c}{2} \\
\bottomrule
\end{tabular}
}
\label{tab:math_reasoning_config}
\end{table}
\begin{table}[h] 
    \centering 
    \caption{Detailed configurations used for fine-tuning CLIP ViT-B/32 on image classification tasks.}
    \resizebox{0.76\textwidth}{!}{%
    \begin{tabular}{l|ccccccc}
    \toprule
    \textbf{Hyperparameters} & \textbf{Cars} & \textbf{DTD} & \textbf{EuroSAT} & \textbf{GTSRB} & \textbf{RESISC45} & \textbf{SUN397} & \textbf{SVHN} \\ 
    \midrule
             Optimizer & \multicolumn{7}{c}{AdamW} \\
             Expert LR & 8e-5 & 1e-2 & 5e-2 & 2e-1 & 5e-2 & 5e-2 & 1.5e-1  \\
             Router LR & 5e-5 & 1e-4 & 2e-4 & 2e-4 & 2e-4 & 2e-4 & 2e-4 \\
             Head LR & 3e-4 & 1e-3 & 1e-3 & 1e-3 & 5e-3 & 1e-3 & 1e-3\\
             LR Scheduler & \multicolumn{7}{c}{Cosine} \\
             Max Seq. Len. & \multicolumn{7}{c}{50} \\
             Batch Size & 512 & 128 & 512 & 512 & 512 & 512 & 512 \\
             Accumulation Steps & \multicolumn{7}{c}{1} \\
             Dropout & \multicolumn{7}{c}{0.0} \\
             Warmup Ratio  & 0.03 & 0.03 & 0.05 & 0.05 & 0.03 & 0.05 & 0.05 \\ 
             \# Epochs  & 280 & 100 & 30 & 50 & 45 & 20 & 20  \\
             Spectral Coefficients $n$  & \multicolumn{7}{c}{3008} \\
             Placement & \multicolumn{7}{c}{Q,V} \\
             Scaling Value $\eta$  & 192 & 32 & 32 & 256 & 32 & 32 & 192 \\
             Load-balancing Scaling $\lambda$ &  \multicolumn{7}{c}{1e-3} \\
             \# Experts & \multicolumn{7}{c}{8} \\
             Top-k & \multicolumn{7}{c}{2} \\
             \bottomrule
    \end{tabular}}
    \label{tab:image_classification_config}
    \vspace{-0.4em}
\end{table}
\begin{table}[h] 
    \centering 
    \caption{Detailed configurations used for fine-tuning RoBERTa-large on NLU tasks.}
    \resizebox{0.75\textwidth}{!}{%
    \begin{tabular}{l|ccccccc}
    \toprule
    \textbf{Hyperparameters} & \textbf{CoLA} & \textbf{SST-2} & \textbf{MRPC} & \textbf{QQP} & \textbf{MNLI} & \textbf{QNLI} & \textbf{RTE} \\ 
    \midrule
             Optimizer & \multicolumn{7}{c}{AdamW} \\
             Expert LR & 5e-2 & 1.2e-1 & 9e-2 & 1e-1 & 1e-5 & 9e-2 & 1e-1  \\
             Router LR & 1.5e-4 & 1e-4 & 2e-4 & 3.9e-4 & 2e-4 & 1.5e-4 & 1.5e-4 \\
             Head LR & 1e-2 & 5e-4 & 1e-3 & 8.4e-4 & 5e-3 & 1e-3 & 6.5e-3\\
             LR Scheduler & \multicolumn{7}{c}{Linear} \\
             Max Seq. Len. & 256 & 128 & 512 & 128 & 512 & 512 & 512  \\
             Batch Size & \multicolumn{7}{c}{32}  \\
             Accumulation Steps & \multicolumn{7}{c}{1}  \\
             Dropout & \multicolumn{7}{c}{0.0}  \\
             Warmup Ratio  & \multicolumn{7}{c}{0.06}  \\ 
             \# Epochs  & 80 & 10 & 30 & 15 & 10 & 30 & 60 \\ 
             Spectral Coefficients $n$  & \multicolumn{7}{c}{1008} \\
             Placement & \multicolumn{7}{c}{Q,V} \\ 
             Scaling Value $\eta$  & 115 & 138 & 160 & 64 & 96 & 72 & 136 \\
             Load-balancing Scaling $\lambda$ & 1.5e-3 & 1.5e-3 & 2e-3 & 2.5e-3 & 3.5e-3 & 2.5e-3 & 1.7e-3 \\
             \# Experts & \multicolumn{7}{c}{8} \\
             Top-k  & \multicolumn{7}{c}{2}  \\
             \bottomrule
    \end{tabular}}
    \label{tab:nlu_config}
\end{table}

\section{Additional Experimental Results}
\subsection{Effect of Frequency Bandwidth $\mathcal{W}$ on Expert Specialization} \label{sec:bandpass}
As defined in Eq. \ref{eq:gaussian}, each expert in FourierMoE is assigned within a specific frequency bandwidth $\mathcal{W}$, which controls the range of spectral components it can access and thus governs the degree of expert specialization. To explain the non-monotonic performance trend observed in Section \ref{sec:hyperparameter_sensitivity}, we analyze how varying $\mathcal{W}$ shapes the spectral organization of experts, as visualized in Figure \ref{fig:bandpss}. When $\mathcal{W}=0.06$, experts are restricted to narrow low-frequency bands, preventing them from modeling informative high-frequency components and consequently limiting per-expert expressiveness. Increasing $\mathcal{W}$ to $0.12$ expands frequency coverage while maintaining well-separated annular structures across experts. This regime strikes a favorable balance between expressiveness and specialization, enabling experts to capture richer spectral patterns without sacrificing functional diversity, which aligns with the performance reported in Table \ref{tab:hyper_band}. In contrast, further increasing the bandwidth to $0.24$ and beyond introduces spectral overlap across experts, weakening specialization and reducing parameter efficiency. In the extreme regime ($\mathcal{W} \geq 0.96$), expert spectra become highly overlapping and near-uniform, collapsing the MoE into a collection of functionally similar modules. Such loss of specialization can lead to ambiguous routing decisions and unstable optimization, ultimately degrading model performance. These observations reveal a trade-off in FourierMoE: while increasing $\mathcal{W}$ enhances per-expert expressiveness, excessively wide bandwidths undermine cross-expert spectral specialization.

\begin{figure*}[t!]
\centering
\includegraphics[width=0.95\linewidth]{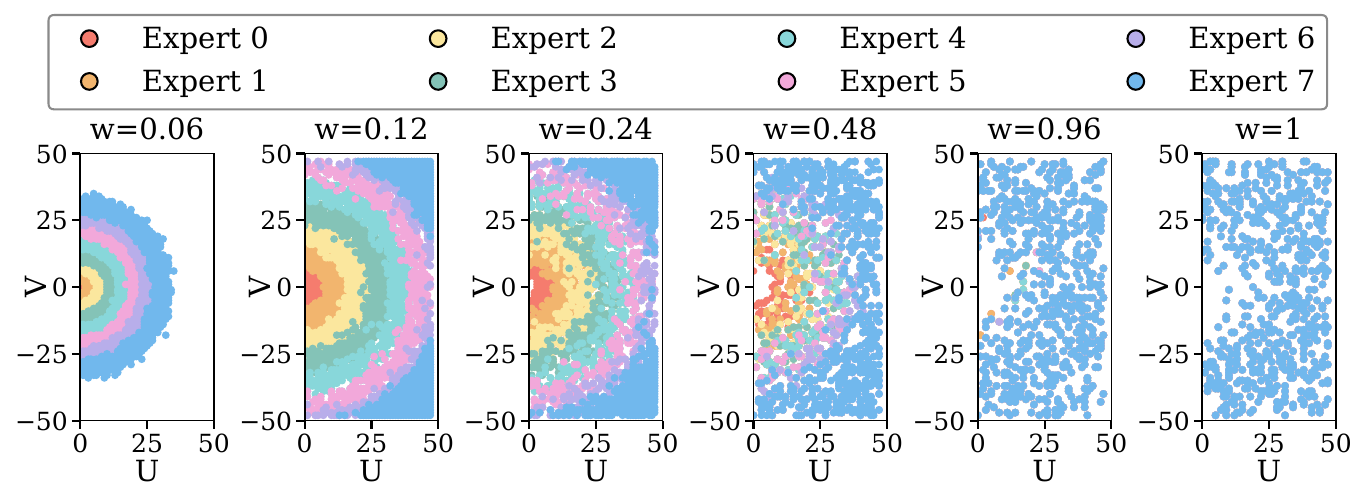}
\caption{Distributions of expert spectral coefficients under different frequency bandwidths $\mathcal{W}$. Moderate bandwidths expand per-expert frequency coverage while preserving well-separated spectral structures, whereas overly large $\mathcal{W}$ induces substantial overlap, leading to degraded specialization.}
\label{fig:bandpss}
\end{figure*}

\section{Complexity Analysis}
\label{subsec:complexity}

We analyze the computational and memory complexity of FourierMoE compared to standard FFT and LoRA fine-tuning. Let $M, N$ denote the weight matrix dimensions, $Z$ the total number of experts, $k$ the number of active experts, and $n$ the number of active spectral coefficients per expert (where $n \ll MN$).

\paragraph{Parameter Efficiency.}
Standard FFT requires updating $\mathcal{O}(MN)$ parameters per layer. LoRA reduces this to $\mathcal{O}(r(M+N))$ where $r$ is the rank of the lightweight tunable matrices. FourierMoE achieves sparsity by learning only the active spectral coefficients and the router. The total trainable parameter count is:
\begin{equation}
\mathcal{P}_{\text{FourierMoE}} = \mathcal{O}(Z \cdot n + Z \cdot N).
\end{equation}
Since the crucial adaptation information is concentrated in a small subset of dominant frequency components \citep{kim2025lfma,gaoparameter}, we can set $n$ such that $Z \cdot n \ll r(M+N)$, allowing us to scale the number of experts $Z$ without incurring significant memory overhead.

\paragraph{Inference Latency.}
The computational cost of FourierMoE during inference consists of the routing operation and the expert reconstruction.
\begin{itemize}
    \item \textbf{Routing Overhead:} The gating mechanism requires a lightweight projection $\mathcal{O}(N \cdot Z)$, which is negligible compared to the forward pass of the LLM layers.
    \item \textbf{Reconstruction Cost:} Constructing the update $\Delta \mathbf{W}$ requires summing $k$ experts. Utilizing the Fast Fourier Transform, the reconstruction complexity is $\mathcal{O}(k \cdot MN \log(MN))$.
\end{itemize}

Crucially, the inference cost depends only on the active set size $k$, not the total expert pool $Z$. This allows FourierMoE to increase model capacity (via larger $Z$) while maintaining constant-time inference complexity, satisfying the condition $\mathcal{O}(k) \approx \text{const}$ with respect to $Z$. Furthermore, by caching the reconstructed $\Delta \mathbf{W}$ for static tasks or utilizing parallelized GPU-optimized FFT kernels, the latency overhead remains within a tractable margin for real-time deployment.

\section{Spectral-Spatial Rank Duality}
\label{sec:theory_analysis}

In this section, we establish a rigorous theoretical connection between the spectral composition of our experts and the algebraic properties of the resulting weight updates in the spatial domain. Specifically, we investigate the relationship between the spectral sparsity of an expert $\mathbf{E}_i$ and the matrix rank of its spatial counterpart $\Delta \mathbf{W}_i$.

We demonstrate that FourierMoE generalizes low-rank adaptation methods (e.g., LoRA) by enabling dynamic rank allocation. Unlike static low-rank methods that fix the rank $r$ globally, FourierMoE allows the router to dynamically assign a specific rank capacity to each token by selecting experts with varying spectral densities.

\subsection{Rank Properties of the Fourier Basis}
\label{subsec:rank_properties}

To analyze the rank of the model's weight update $\Delta \mathbf{W}$, we examine the algebraic structure of the Fourier basis kernel $\mathbf{B}_{u,v}$ defined in Eq.~(\ref{eq:reparam}).

\begin{lemma}[\textbf{Rank-1 Property of Fourier Kernels}]
\label{lemma:rank1}
For any frequency coordinate $(u,v)$, the Fourier basis kernel $\mathbf{B}_{u,v} \in \mathbb{C}^{M \times N}$ is a rank-1 matrix.
\end{lemma}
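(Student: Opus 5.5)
The plan is to exhibit $\mathbf{B}_{u,v}$ explicitly as an outer product of two nonzero vectors, which immediately gives rank at most one, and then to rule out rank zero. Starting from the definition in Eq.~(\ref{eq:reparam}), I will use the fact that the exponent is additive in the row index $q$ and the column index $y$. The entry therefore factors as
\begin{equation*}
\mathbf{B}_{u,v}(q,y) = \exp\!\left(j\frac{2\pi uq}{M}\right)\exp\!\left(j\frac{2\pi vy}{N}\right).
\end{equation*}

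Next I define the column vectors $\mathbf{a}_u \in \mathbb{C}^{M}$ and $\mathbf{b}_v \in \mathbb{C}^{N}$ by
\begin{equation*}
(\mathbf{a}_u)_q = e^{j2\pi uq/M}, \qquad (\mathbf{b}_v)_y = e^{j2\pi vy/N}.
\end{equation*}
These are exactly the $u$-th column of the $M$-point and the $v$-th column of the $N$-point (unnormalized, conjugated) DFT matrices. The factorization above then reads $\mathbf{B}_{u,v} = \mathbf{a}_u \mathbf{b}_v^{\top}$. Here I use the plain transpose, not the conjugate transpose, so the signs of the exponents stay correct. Because every column of an outer product is a scalar multiple of $\mathbf{a}_u$, it follows that $\operatorname{rank}(\mathbf{B}_{u,v}) \le 1$.

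To get equality, I observe that every entry of $\mathbf{B}_{u,v}$ has modulus one. In particular $\mathbf{B}_{u,v}(0,0)=1$, so $\mathbf{B}_{u,v}\neq \mathbf{0}$ and its rank is exactly one. This holds for every $(u,v)$, including the DC coordinate $(0,0)$, where $\mathbf{B}_{0,0}$ is the all-ones matrix, still rank one. The Nyquist frequencies need no separate case either, since their entries are $\pm 1$ and hence nonzero.

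There is no real obstacle here. The only point needing care is writing the factorization with $\mathbf{b}_v^{\top}$ rather than $\mathbf{b}_v^{H}$, which keeps the sign of the column exponent correct. I would also note, for the later rank-duality argument, that the rank is taken over $\mathbb{C}$. For a conjugate-symmetric pair, $\mathbf{B}_{u,v}$ and $\mathbf{B}_{\langle -u\rangle_M,\langle -v\rangle_N}=\overline{\mathbf{B}_{u,v}}$ combine into a real matrix of rank at most two. That remark will be useful downstream, but it is not needed for this lemma.
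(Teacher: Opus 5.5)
Your proof is correct and follows the paper's argument: the paper likewise factors the exponential, writes $\mathbf{B}_{u,v} = \mathbf{f}_u \mathbf{g}_v^{T}$ as an outer product of two nonzero vectors, and concludes that it has exactly one nonzero singular value. Your observation that $\mathbf{B}_{u,v}(0,0)=1$ does the same job as the paper's nonzero-vector remark, namely ruling out rank zero.
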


\begin{proof}
Recall the definition of the kernel entry at spatial index $(q,y)$:
\begin{equation}
\mathbf{B}_{u,v}(q, y) = \exp\left(j 2\pi \frac{uq}{M}\right) \cdot \exp\left(j 2\pi \frac{vy}{N}\right).
\end{equation}
Let $\mathbf{f}_u \in \mathbb{C}^M$ and $\mathbf{g}_v \in \mathbb{C}^N$ be vectors such that the entries are defined as $\mathbf{f}_u(x) = e^{j 2\pi uq/M}$ and $\mathbf{g}_v(y) = e^{j 2\pi vy/N}$. We can express the matrix $\mathbf{B}_{u,v}$ as the outer product:
\begin{equation}
\mathbf{B}_{u,v} = \mathbf{f}_u \mathbf{g}_v^T.
\end{equation}
Since $\mathbf{B}_{u,v}$ is formed by the outer product of two non-zero vectors, it possesses exactly one non-zero singular value. Thus, $\operatorname{rank}(\mathbf{B}_{u,v}) = 1$.
\end{proof}

\subsection{Spectral Sparsity Bounds Spatial Rank}

Building on Lemma~\ref{lemma:rank1}, we derive the upper bound on the rank of the spatial update generated by an expert defined by a sparse set of active frequencies.

\begin{theorem}[\textbf{Spectral Sparsity-Rank Inequality}]
\label{thm:rank_bound}
Let $\mathbf{E}_i$ be a frequency-specialized expert with a set of active frequencies $\Omega_i$ satisfying the conjugate symmetry condition (Theorem~\ref{thm:symmetry}). Let $K = |\Omega_i|$ be the spectral sparsity (the total number of non-zero frequency coefficients). The rank of the resulting spatial update $\Delta \mathbf{W}_i$ is bounded by:
\begin{equation}
\operatorname{rank}(\Delta \mathbf{W}_i) \leq \min(M, N, K).
\end{equation}
\end{theorem}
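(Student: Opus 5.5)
The plan is to write $\Delta \mathbf{W}_i$ as a sum of at most $K$ rank-one matrices using Lemma~\ref{lemma:rank1}. Then combine subadditivity of rank with the trivial dimension bound. Here $K$ counts all nonzero entries of $\mathbf{E}_i$, including the conjugate partners filled in by the symmetry constraint of Theorem~\ref{thm:symmetry}.

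\textbf{Step 1: expand the update.} Since $\mathbf{E}_i(u,v)=0$ off its support, Eq.~(\ref{eq:reparam}) reduces to a finite sum
\begin{equation*}
\Delta \mathbf{W}_i = \frac{1}{MN}\sum_{(u,v)\in \operatorname{supp}(\mathbf{E}_i)} \mathbf{E}_i(u,v)\,\mathbf{B}_{u,v},
\end{equation*}
with at most $K$ terms.

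\textbf{Step 2: apply the rank-one property.} By Lemma~\ref{lemma:rank1}, each $\mathbf{B}_{u,v}=\mathbf{f}_u\mathbf{g}_v^T$ has rank one. Scaling by a complex scalar does not increase rank, and rank is subadditive over $\mathbb{C}$. Hence $\operatorname{rank}_{\mathbb{C}}(\Delta \mathbf{W}_i)\le K$.

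An equivalent and cleaner formulation uses matrices. Let $\mathbf{F}_M\in\mathbb{C}^{M\times M}$ and $\mathbf{F}_N\in\mathbb{C}^{N\times N}$ be the inverse DFT matrices, which are invertible. Then $\Delta\mathbf{W}_i=\frac{1}{MN}\mathbf{F}_M\mathbf{E}_i\mathbf{F}_N^T$, so $\operatorname{rank}(\Delta \mathbf{W}_i)=\operatorname{rank}(\mathbf{E}_i)$. A matrix with $K$ nonzero entries has rank at most $K$. I would mention this as an alternative, since it even gives equality with $\operatorname{rank}(\mathbf{E}_i)$.

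\textbf{Step 3: pass to real rank and add the dimension bound.} Conjugate symmetry ensures $\Delta\mathbf{W}_i$ is real. For a real matrix, the rank over $\mathbb{R}$ equals the rank over $\mathbb{C}$, because rank is given by the vanishing of minors, which does not depend on the field extension. Finally, $\operatorname{rank}(\Delta\mathbf{W}_i)\le\min(M,N)$ trivially for an $M\times N$ matrix. Taking the minimum of the two bounds gives the claim.

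\textbf{Expected obstacle.} The only delicate point is bookkeeping in $K$. If $K$ is read as $|\Omega_i|$, meaning only the learned coordinates, then symmetrization can double the support, and the bound becomes $2|\Omega_i|$ minus self-conjugate points. To handle this, I would either define $K$ as the support size after symmetrization, as the statement's parenthetical suggests, or pair the terms via Theorem~\ref{thm:symmetry}. Each conjugate pair then contributes $2\operatorname{Re}\{\mathbf{E}_i(u,v)\mathbf{B}_{u,v}\}$, which has real rank at most $2$. This yields a bound of $2|\Omega_i|$ in terms of learned coordinates, and I would state this caveat explicitly.
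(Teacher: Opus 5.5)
Your proposal is correct and follows essentially the same route as the paper: expand $\Delta\mathbf{W}_i$ as a sum of at most $K$ scaled rank-one kernels $\mathbf{B}_{u,v}$ (Lemma~\ref{lemma:rank1}), apply subadditivity of rank, and finish with the dimension bound $\min(M,N)$. Your additional remarks are all valid refinements that the paper leaves implicit: the factorization $\Delta\mathbf{W}_i=\frac{1}{MN}\mathbf{F}_M\mathbf{E}_i\mathbf{F}_N^T$ giving $\operatorname{rank}(\Delta\mathbf{W}_i)=\operatorname{rank}(\mathbf{E}_i)$, the equality of real and complex rank, and the caveat that $K$ must count the symmetrized support (otherwise the bound becomes $2|\Omega_i|$).
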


\begin{proof}
From Eq.~(\ref{eq:reparam}), the spatial update is a linear combination of basis kernels:
\begin{equation}
\Delta \mathbf{W}_i = \frac{1}{MN} \sum_{(u,v) \in \Omega_i} \mathbf{F}(u,v) \cdot \mathbf{B}_{u,v}.
\end{equation}
Using the subadditivity property of matrix rank (the rank of a sum is less than or equal to the sum of the ranks), we have:
\begin{align}
\operatorname{rank}(\Delta \mathbf{W}_i) &= \operatorname{rank}\left( \sum_{(u,v) \in \Omega_i} c_{u,v} \mathbf{B}_{u,v} \right) \nonumber \\
&\leq \sum_{(u,v) \in \Omega_i} \operatorname{rank}(c_{u,v} \mathbf{B}_{u,v}).
\end{align}
By Lemma~\ref{lemma:rank1}, $\operatorname{rank}(\mathbf{B}_{u,v}) = 1$. Therefore:
\begin{equation}
\operatorname{rank}(\Delta \mathbf{W}_i) \leq \sum_{(u,v) \in \Omega_i} 1 = |\Omega_i| = K.
\end{equation}
Since the rank of any matrix in $\mathbb{R}^{M \times N}$ cannot exceed its dimensions, it holds that $\operatorname{rank}(\Delta \mathbf{W}_i) \leq \min(M, N, K)$.
\end{proof}

\paragraph{Remark on Real-Valued Updates.}
Due to the Conjugate Symmetry Condition (Theorem~\ref{thm:symmetry}), active frequencies must appear in pairs $((u,v), (\langle -u \rangle_M, \langle -v \rangle_N))$ to ensure $\Delta \mathbf{W}_i \in \mathbb{R}^{M \times N}$. A symmetric pair combines to form a real-valued sinusoidal wave:
\begin{equation}
c \mathbf{B}_{u,v} + c^* \mathbf{B}_{\langle -u \rangle_M, \langle -v \rangle_N} = 2|c| \cos\left( 2\pi\left(\frac{uq}{M} + \frac{vy}{N}\right) + \angle c \right).
\end{equation}
While this sum is real-valued, it is formed by the sum of two rank-1 complex matrices. Thus, each symmetric pair contributes at most 2 to the rank of the spatial matrix.

\subsection{Input-Dependent Dynamic Rank Allocation}

The insights from Theorem~\ref{thm:rank_bound} reveal the fundamental mechanism of FourierMoE. By initializing experts with Gaussian filters of varying bandwidths $\mathcal{W}_i$ (Eq.~\ref{eq:gaussian}), we effectively create a bank of experts with different rank capacities.

\begin{corollary}[\textbf{Router as a Rank Selector}]
Let the router $\mathcal{G}(x)$ select a subset of experts $\mathcal{S}(x)$. The effective rank of the update applied to input $x$ is:
\begin{equation}
r_{eff}(x) = \operatorname{rank}\left( \sum_{i \in \mathcal{S}(x)} \mathcal{G}(x)_i \Delta \mathbf{W}_i \right) \leq \sum_{i \in \mathcal{S}(x)} |\Omega_i|.
\end{equation}
\end{corollary}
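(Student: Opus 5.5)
The plan is to reduce the corollary to Theorem~\ref{thm:rank_bound} together with subadditivity of rank. We expand the composite update as a sum over the selected experts and then bound each expert's contribution separately.

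First, we write
\begin{equation}
\Delta \mathbf{W}(x) = \sum_{i \in \mathcal{S}(x)} \mathcal{G}(x)_i \, \Delta \mathbf{W}_i .
\end{equation}
The gating weights $\mathcal{G}(x)_i$ are scalars, so scaling by them cannot increase rank: $\operatorname{rank}(\mathcal{G}(x)_i \Delta \mathbf{W}_i) \le \operatorname{rank}(\Delta \mathbf{W}_i)$. The rank drops to zero if a gate happens to vanish.

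Next, we apply subadditivity of matrix rank to the finite sum over $\mathcal{S}(x)$. This is the same inequality used in the proof of Theorem~\ref{thm:rank_bound}, and it gives
\begin{equation}
r_{eff}(x) \le \sum_{i \in \mathcal{S}(x)} \operatorname{rank}(\Delta \mathbf{W}_i).
\end{equation}
We then invoke Theorem~\ref{thm:rank_bound} for each selected expert, which yields $\operatorname{rank}(\Delta \mathbf{W}_i) \le \min(M,N,|\Omega_i|) \le |\Omega_i|$. Summing these bounds gives the claimed inequality.

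The step needing the most care is how $|\Omega_i|$ is counted. Theorem~\ref{thm:rank_bound} bounds rank by the number of nonzero spectral entries, but after the conjugate-symmetric completion in Algorithm~\ref{alg:fouriermoe}, the support of $\mathbf{E}_i$ can contain up to $2|\Omega_i|$ entries when $\Omega_i$ lists only the $n$ learned coordinates. I would first fix the convention of Theorem~\ref{thm:rank_bound}, where $K=|\Omega_i|$ is the full nonzero support including mirrored coordinates. Under that convention the corollary follows verbatim.

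If instead $\Omega_i$ denotes only the learned half, I would note that each symmetric pair contributes at most $2$ to the rank, by the Remark on Real-Valued Updates. The bound then reads $\sum_i 2|\Omega_i|$, and I would state the corollary with that explicit constant.

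We can also tighten the bound by taking the minimum with $\min(M,N)$, since $r_{eff}(x)$ is the rank of an $M\times N$ matrix.
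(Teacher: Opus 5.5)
Your proposal is correct and matches the paper's argument. The paper gives no separate proof: it treats the corollary as an immediate consequence of rank subadditivity over the gated sum plus Theorem~\ref{thm:rank_bound} applied to each selected expert, exactly as you do. Your caution about how $|\Omega_i|$ is counted is warranted, because the main text introduces $\Omega_i$ as the $n$ learned coordinates whose mirrors Algorithm~\ref{alg:fouriermoe} fills in, whereas the stated bound needs the full conjugate-symmetric support convention of Theorem~\ref{thm:rank_bound} (a single generic conjugate pair already yields a rank-$2$ cosine update), so your $2|\Omega_i|$ version is the correct statement under the learned-half reading.
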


This leads to a critical theoretical distinction between FourierMoE and standard LoRA:
\begin{itemize}
    \item \textbf{LoRA (Static Rank):} Forces a fixed rank $r$ for all inputs, i.e., $\Delta \mathbf{W} = \mathbf{B}\mathbf{A}$. The expressivity is static regardless of input complexity.
    \item \textbf{FourierMoE (Dynamic Rank):} The router dynamically determines the required complexity. For ``easy'' tokens (e.g., stop words), the router may select Low-Bandwidth Experts (small $|\Omega_i|$), applying a \textit{Low-Rank} update. For ``hard'' tokens (e.g., reasoning steps), it may select High-Bandwidth Experts (large $|\Omega_i|$), applying a \textit{High-Rank} update.
\end{itemize}

\subsection{Orthogonality and Global Receptive Field}

Finally, we address why spectral rank-1 updates differ from spatial rank-1 updates. In standard low-rank adaptation, the basis vectors (columns of $\mathbf{A}$ and $\mathbf{B}$) are often learned to be sparse or localized in the spatial domain. In contrast, the Fourier basis vectors $\mathbf{f}_u$ and $\mathbf{g}_v$ are \textit{dense} in the spatial domain; every element has magnitude $1/\sqrt{M}$.

\begin{proposition}[\textbf{Global Information Flow}]
A rank-1 update in the Fourier domain effects a global update in the spatial domain. Specifically, modifying a single spectral coefficient $\mathbf{F}(u,v)$ updates every weight in $\Delta \mathbf{W}$ simultaneously with constant magnitude, merely shifting the phase.
\end{proposition}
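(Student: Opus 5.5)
The plan is to work directly from the reparameterization in Eq.~(\ref{eq:reparam}) together with the outer-product factorization established in the proof of Lemma~\ref{lemma:rank1}. Suppose the spectral matrix changes only at one coordinate, $\mathbf{F}(u,v)\mapsto \mathbf{F}(u,v)+\delta$, with $\delta\in\mathbb{C}$. Linearity of $\mathcal{F}^{-1}$ then gives the induced spatial change
\begin{equation}
\delta\,\Delta\mathbf{W} = \frac{\delta}{MN}\,\mathbf{B}_{u,v} = \frac{\delta}{MN}\,\mathbf{f}_u\mathbf{g}_v^{T}.
\end{equation}
This is the rank-1 update named in the statement.

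The second step is to read off the entries. We have $(\delta\,\Delta\mathbf{W})(q,y)=\frac{\delta}{MN}\exp\big(j2\pi(uq/M+vy/N)\big)$. Because $|e^{j\theta}|=1$, every entry has modulus exactly $|\delta|/(MN)$, and this value does not depend on $(q,y)$. In particular, no entry is zero, so the change reaches every weight; this is the "global" claim. The argument is $\angle\delta+2\pi(uq/M+vy/N)$, which varies linearly in $(q,y)$; this is the "merely shifting the phase" claim. I will also note the normalization: with the unnormalized kernel the magnitude is $1/(MN)$, whereas with unitary-normalized $\mathbf{f}_u/\sqrt{M}$ and $\mathbf{g}_v/\sqrt{N}$ it becomes $1/\sqrt{MN}$, matching the $1/\sqrt{M}$ density remark made just before the statement.

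The third step, and the main obstacle, is to reconcile this with the real-valued setting of Theorem~\ref{thm:symmetry}. In FourierMoE, changing $\mathbf{F}(u,v)$ also changes its conjugate partner at $(\langle -u\rangle_M,\langle -v\rangle_N)$. By the Remark on Real-Valued Updates, the resulting spatial change is then $\frac{2|\delta|}{MN}\cos\big(2\pi(uq/M+vy/N)+\angle\delta\big)$. Its magnitude is no longer literally constant, and it can vanish at isolated entries. So I will state the proposition precisely in the complex single-coefficient form, where constant magnitude holds exactly. For the symmetric case I will state it as the weaker claim that the update is a single real sinusoid with constant amplitude envelope $2|\delta|/(MN)$ spread over all entries. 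Its phase is set by $\angle\delta$, and its energy is distributed uniformly by Parseval: the squared entries sum to $2|\delta|^2/(MN)$, except at self-conjugate DC/Nyquist coordinates. This amplitude-envelope interpretation is the one under which "constant magnitude" survives after symmetrization.
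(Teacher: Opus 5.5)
Your Steps 1--2 are the paper's only-implicit justification (Lemma~\ref{lemma:rank1}'s outer product plus the remark that Fourier vectors are dense with constant-modulus entries) made precise: a single-coefficient change is $\frac{\delta}{MN}\mathbf{f}_u\mathbf{g}_v^{T}$, every entry has modulus $|\delta|/(MN)$, and only the phase $\angle\delta + 2\pi(uq/M+vy/N)$ varies; you are also right that this holds only in the complex single-kernel reading, since under the Hermitian pairing that FourierMoE enforces the change is the real sinusoid $\frac{2|\delta|}{MN}\cos\bigl(2\pi(uq/M+vy/N)+\angle\delta\bigr)$, a point the paper glosses over. One correction to your caveat: the zeros of that cosine need not be isolated (e.g.\ with $4\mid M$, $v=0$, $u=M/4$, $\angle\delta=0$, every odd row receives zero update), so in the symmetric case even the ``every weight'' clause can fail on a large fraction of entries, and Parseval fixes only the total energy $2|\delta|^2/(MN)$, not a uniform entrywise spread.
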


This property implies that our FourierMoE is exceptionally efficient at learning \textit{global} features (via low frequencies) and \textit{distributed} patterns, whereas standard spatial sparsity methods struggle to propagate information globally without increasing parameter density. The Fourier transform acts as a maximally incoherent basis change, allowing sparse spectral experts to model dense spatial correlations efficiently.



\end{document}